\documentclass[12pt]{article}
\usepackage{amsmath, amsthm, amssymb}
\usepackage{graphicx}
\usepackage{hyperref}
\usepackage{comment}
\usepackage{appendix}
\usepackage{booktabs}

\usepackage{comment}
\usepackage{blkarray}
\usepackage{algorithm}
\usepackage{algpseudocode}
\usepackage[table]{xcolor} 
\usepackage{listings}
\usepackage{booktabs}
\usepackage{enumitem}
\usepackage[english]{babel}
\usepackage{pgfplots}    
\pgfplotsset{compat=newest}

\newtheorem{theorem}{Theorem}[section]
\newtheorem{lemma}[theorem]{Lemma}
\newtheorem{corollary}[theorem]{Corollary}
\theoremstyle{definition}
\newtheorem{definition}[theorem]{Definition}
\newtheorem{assumption}{Assumption}
\newtheorem{proposition}[theorem]{Proposition}

\numberwithin{theorem}{section}     

\usepackage{parskip}

\voffset=-1.0in
\hoffset=-0.5in
\textheight 9.0truein
\textwidth 6.0truein

\newcommand{\sgn}{\operatorname{sgn}}
\newcommand{\MSE}{\operatorname{MSE}}

\newcommand{\textsub}[1]{\ensuremath{_{\textsc{#1}}}}

\newcommand{\QSZT}{Q\textsub{szt}}
\newcommand{\QBT}{Q\textsub{bt}}
\newcommand{\HSZT}{H\textsub{szt}}
\newcommand{\HBT}{H\textsub{bt}}
\newcommand{\tauSZT}{\tau\textsub{szt}}
\newcommand{\tauBT}{\tau\textsub{bt}}

\newcommand{\zpos}{0^{+}}
\newcommand{\zneg}{0^{-}}

\newcommand{\PhiF}{\Phi_{\!F}}
\newcommand{\PhiR}{\Phi_{\!R}}

\begin{document}

\title{The Fourth State: Signed-Zero Ternary for\\ 
         Stable LLM Quantization (and More)}
\author{Jeffrey Uhlmann\\
Dept.\ of Electrical Engineering and Computer Science\\
University of Missouri - Columbia}
\date{}
\maketitle

\begin{abstract}
Quantization is usually regarded as a means to trade quality of performance for reduced compute
requirements, i.e., as a suboptimal approximation. However, if examined in terms of a fixed
overall resource budget, a very different perspective arises. We introduce Signed-Zero Ternary (SZT),
a 2-bit quantization that deterministically provides gradient information with no forward-path 
penalty. Our analysis provides evidence that it may improve information density compared to 
non-quantized alternatives.\\ 
\begin{footnotesize}
~\\
\noindent {\bf Keywords}: {\em Signed-Zero Ternary (SZT), 2-bit Quantization, 
Quantization-Aware Training (QAT), Large Language Models (LLMs), Model Compression, 
Training Stability, Deterministic Quantization} 
\end{footnotesize}
\end{abstract}

\section{Introduction}\label{sec:intro}

Quantization is typically viewed as a pragmatic trade-off between model fidelity 
and computational costs \cite{courbariaux2016dorefa,banner2018post,dettmers2022eightbit}. 
Aggressive $2$-bit ternary-state schemes are now commonly used to allow large-language 
models (LLMs) to run on commodity accelerators and edge devices. However, this leads to 
training-time issues resulting from intervals in which the quantizer output is numerically
zero and the surrogate gradient vanishes. These near-zero intevals are referred
to as ``dead zones'' \cite{sehwag2020bnptq}.  

We introduce a Signed-Zero Ternary (SZT) quantization in which we use the 
remaining fourth state in the 2-bit ternary encoding to distinguish {\em two}
zero states (code words). This approach retains the benefits of ternary-state
quantization while adding 1-bit gradient information at essentially no cost.
This preserves the forward-path behavior of balanced ternary while the
back-propagation rule remains fully deterministic for the straight-through form.
We argue that availability of gradient information in this maximally quantized
representation may tend to maximize overall information density rather than
approximate it. All analytical results are obtained via changes to the 
encode/decode logic only, leaving the matrix-multiply datapath 
untouched, i.e., no kernel changes.

\subsection{Motivation: why a fourth state matters}\label{sec:intro-motivation}

Balanced ternary quantization \cite{zhu2017ttq} encodes $\{-1,0,+1\}$ using two bits, which 
leaves one of the four available states unused.  A consequence of having only three states is 
that during quantization-aware training (QAT), a weight falling in the dead-zone interval 
$|w|\leq\Delta$ produces the same quantized value for arbitrarily many SGD steps because 
the optimizer receives no feedback until the update magnitude exceeds the threshold $\Delta$.

Empirical histograms of pretrained LLM checkpoints \cite{frantar2022gptq} 
show sustantial mass near zero \cite{sehwag2020bnptq}.
The SZT signed-zero states ($\zpos$ or $\zneg$) provide the optimizer with a gradient for 
every sub-threshold sign flip while leaving the forward alphabet unchanged.
Unlike approaches that use randomization to stochastically push weights 
out of the dead zone, SZT's extra bit of feedback provides a purely deterministic 
solution that we will show can significantly reduce the expected mean first-passage 
time out of the dead zone.

Dead-zone instability has historically made designers favor smaller, higher-precision models. 
However, by addressing the dead-zone problem deterministically, SZT represents an
alternative that supports twice the number of parameters of a 4-bit model within the 
same memory footprint.

\subsection{Related work}
\label{sec:intro-related}

Classical post-training quantization (PTQ) applies a fixed mapping to
full-precision weights and activations after training
\cite{jacob2018qtensor,nagel2020aas}.  Quantization-aware training extends
this idea by inserting surrogate gradients, e.g., the straight-through
estimator (STE) \cite{bengio2013estimating}, into the backward graph, allowing
the optimizer to compensate for quantization noise.  Several authors propose
stochastic rounding in quantization
\cite{zhang2020bitrandom}, but at the cost of non-determinism
and additional random-number generation hardware.  

As already mentioned, an important distinction of our proposed SZT approach
is that it is {\em deterministic}: both the encoder and the surrogate
gradient are fixed functions of the latent weight, which guarantees reproducible
training trajectories (Section~\ref{sec:determinism}). This deterministic property
also makes it more amenable to sensitivity, entropy, and bias analysis.

The structure of this paper is as follows:
Section~\ref{sec:szt} formalizes the SZT quantizer and threshold rules.
Section~\ref{sec:advantages} presents the analytical results, with proofs
deferred to the appendices.  And Section~\ref{sec:discussion} discusses hardware
implications and open empirical questions for future work.

\paragraph{Scope of analysis.}
Throughout the paper we adopt the {\em symmetric, layer-wise, single-scale}
quantization model that underlies most open-source PTQ and QAT tool-chains
(e.g., GPTQ, AWQ, SmoothQuant).  However, all analytic results presented here (sensitivity ratio,
entropy gap, bias/variance, MFPT, PAC–Bayes) generalize 
channel-wise by replacing the single threshold $\Delta$ with a
channel-specific $\Delta_{c}$. Our proofs can then be extended
at the cost of significant extra bookkeeping and an inability in some
cases to derive ``nice'' closed-form constants. Section~\ref{sec:tradeoff} 
and Appendix~\ref{app:snr} 
outline straightforward extensions to the case of per-channel 
or asymmetric thresholds.

\section{Balanced Ternary (BT) vs.\ Signed-Zero Ternary (SZT)}
\label{sec:szt}

\subsection{Dead-zone pathology}
\label{sec:deadzone}

Balanced-ternary (BT) quantization encodes a latent weight
$w\in\mathbb{R}$ using the three\-symbol alphabet
$\{-1,\,0,\,+1\}$ and a single symmetric threshold
$\Delta>0$:
\begin{equation}
\QBT(w) ~=~
\begin{cases}
     +1,       & ~w ~>~  \Delta,\\
     ~~0,  & |w| \leq~ \Delta,\\
     -1,        & ~w ~< -\Delta.
\end{cases}
\label{eq:bt}
\end{equation}
As noted earlier this leaves one of the four available code-words
as an unused {\em idle} or {\em don't-care} state. This isn't an issue 
during the forward pass, but during training 
the dead-zone  interval $[-\,\Delta,\,\Delta]$ implies that the true 
Jacobian $\frac{\partial \QBT}{\partial w}$ is zero almost 
everywhere within that interval.

We denote the update magnitude for a single step by
$s\triangleq\alpha\lVert g\rVert$ (i.e., learning rate $\alpha$, gradient
$g$).  For sub-threshold steps $0<s<\Delta$,
the probability that a BT-quantized weight changes its state is
\begin{equation}
     \PhiF(s) ~=~ 2\int_{\Delta-s}^{\Delta} p(w)~dw,
\end{equation}
where $p(w)$ is the latent weight density.
Because $p(w)$ is typically small near $\pm\Delta$,
$\PhiF(s)$ is strongly suppressed (cf., Section~\ref{sec:adv-sens}).
This results in a ``flattening'' of the loss landscape around zero that tends
to reduce convergence and increase the mean first-passage time (MFPT)
out of the dead zone. In the next section we formally define the 
Signed-Zero Ternary (SZT) approach for mitigating these effects.

\subsection{Formal Definition of the Quantizer}
\label{subsec:sztDefinition}

Let $w \in \mathbb{R}$ be a latent (full-precision) weight and let $\Delta>0$ denote a fixed quantization threshold. The four-state quantizer $\QSZT(w)$ is defined as:
\begin{equation}
\QSZT(w) := 
\begin{cases}
+1,         & ~~~~~ w > \Delta, \\
~\zpos, & ~~~ 0 < w \leq \Delta, \\
~\zneg,  & -\Delta \leq w < 0, \\
-1,          & ~~~~~~w < -\Delta,
\end{cases}
\label{eq:sztQuantizer}
\end{equation}
where $\zpos$ and $\zneg$ are distinct states that share the same numeric value in the forward pass. 
To acommodate this, we introduce a decode function $v(q)$:
\begin{equation}
v(q) ~:=~ 
\begin{cases}
     +1 & \text{if } q = +1,\\
     ~~0  & \text{if } q \in \{\zpos, \zneg\},\\
     -1 & \text{if } q = -1,
\end{cases}
\label{eq:decodeFunction}
\end{equation}
so that matrix multiplication kernels can operate on the conventional ternary alphabet $\{-1,0,+1\}$ with no change to data-path arithmetic, i.e., only encode/decode logic changes.

\subsection{Deterministic straight-through estimator (STE)}

During back-propagation, a surrogate Jacobian is needed for the non-differentiable
quantizer.  Following the straight-through paradigm \cite{bengio2013estimating},
we propagate the upstream gradient $\nabla_{q}\mathcal{L}$ unchanged outside the
dead zone but multiply by the stored sign when inside that interval:
\begin{equation}
\label{eq:sztSte}
     \widehat{\nabla}_w\mathcal{L} ~:=~
\begin{cases}
     \nabla_q\mathcal{L}, & |w|>\Delta,\\
     \sgn(q) \nabla_q \mathcal{L}, & |w|\leq\Delta,
\end{cases}
\end{equation}
where: 
\begin{equation}
     \sgn(q) ~=~ 
\begin{cases}
     +1 & \text{if}~ q\in\{\zpos,+1\}\\
     -1  & \text{if}~ q\in\{\zneg,-1\}
\end{cases}
\end{equation}
Because this rule is fully deterministic, identical
inputs and random seed reproduce {\em bit-exact} training trajectories
(Lemma~\ref{lem:repro}).

\subsection{Threshold selection for SZT}
\label{sec:threshold}

SZT's only free parameter is the symmetric threshold $\Delta$. We propose the practical 
default of setting $\Delta$ equal to the standard deviation of the latent weights ($\sigma$) 
on a per-layer basis:
\begin{equation}
\label{eq:delta-equals-sigma}
    \Delta ~=~ \sigma
\end{equation}
This default choice eliminates the only tunable parameter, and as detailed in 
Appendices~\ref{app:delta-laplace} and~\ref{app:threshold}, it is provably 
optimal for a Laplace prior; near-optimal for a Gaussian prior; and increases the 
Mean-Squared Error (MSE) by less than 5\% from the theoretical minimum. 

All analytical results that follow are stated in terms of the dimensionless ratio 
$k=\Delta/\sigma$ to allow for the direct substitution of alternative threshold 
heuristics.

\subsection{Implementation note: encode / decode only, GEMM unchanged}
\label{sec:impl}

As already discussed, Signed-Zero Ternary (SZT) is a {\em representation-level}
change: the numeric stream consumed by GEMM remains $\{-1,0,+1\}$, so every
matrix-multiply and sparsity kernel is {\em byte-for-byte identical} to those
used for balanced ternary.

Weights are still packed two bits per value, i.e., the extra state only changes the
{\em meaning} of the idle code-word. No firmware patch is needed for edge
inference engines that already understand balanced-ternary blobs.

In summary, hardware that currently runs balanced ternary 
can run SZT {\em compile-time only}. Below is one possible
formulation:
\begin{algorithm}[t]
\label{alg:qat-szt}
  \caption{Quantization-Aware Training (QAT) with Signed-Zero Ternary.
        (All analysis in Section~\ref{sec:advantages} assumes $\Delta$ is held fixed, but it can be
        after each epoch or phase without altering the deterministic update rule.)}
  \small
  \begin{algorithmic}[1]
    \Require
      training data $\{(x_i,y_i)\}_{i=1}^{N}$,
      initial weights $w_0$,
      threshold $\Delta$ (per layer),
      optimizer $\mathsf{Opt}$ with learning-rate schedule $\{\alpha_t\}$
    \Statex
    \For{$t = 0,1,2,\dots$}
      \State Sample mini-batch $\mathcal{B}_t\subset\{1,\dots,N\}$
      \State {\bf Encode}: $q_t \leftarrow \QSZT(w_t)$   \Comment{Alg.~\ref{eq:sztQuantizer}}
      \State {\bf Decode}: $\tilde{w}_t \leftarrow v(q_t)$           \Comment{Eq.~\ref{eq:decodeFunction}}
      \State Forward pass: $\hat{y} \leftarrow f(\tilde{w}_t, x_{\mathcal{B}_t})$
      \State Compute loss $\mathcal{L}_t \leftarrow \ell(\hat{y}, y_{\mathcal{B}_t})$
      \State Back-propagate with deterministic STE (Eq.~\ref{eq:sztSte})
             to obtain $\widehat{\nabla}_{w}\mathcal{L}_t$
      \State {\bf Update}: $w_{t+1} \leftarrow
              \mathsf{Opt}\left(w_t, \widehat{\nabla}_{w}\mathcal{L}_t, \alpha_t\right)$
    \EndFor
  \end{algorithmic}
\end{algorithm}


\section{Analytical Advantages of SZT}
\label{sec:advantages}

\begin{assumption}[Fixed quantization step]
\label{ass:fixed-delta}
Each layer uses a constant symmetric threshold $\Delta>0$ chosen once
during calibration (cf., Section~\ref{sec:threshold}).  Channel-wise thresholds
$\{\Delta_{c}\}$ yield identical bounds after substituting
$\Delta\rightarrow\Delta_{c}$.
\end{assumption}

\begin{assumption}[Symmetric unimodal prior]
\label{ass:symmetric}
Every layer’s pre-quantized weights follow an absolutely continuous,
{\em symmetric} and {\em unimodal} density $p(w)$ with $p(0)>0$ and
finite variance $\sigma^{2}=\mathbb{E}[w^{2}]$.
\end{assumption}

\begin{assumption}[Lipschitz loss]
\label{ass:lipschitz}
The loss $\mathcal{L}$ is $L$-Lipschitz in each weight.  Without loss of
generality we rescale so $L\leq 1$, i.e., this constant shows up in bias and
convergence bounds such as Theorem~\ref{thm:bias-var}.
\end{assumption}

\begin{assumption}[Ornstein–Uhlenbeck proxy]
\label{ass:ou-proxy}
For the mean-first-passage and convergence analysis in
Section~\ref{sec:adv-mfpt} we approximate the latent weight dynamics by the
Ornstein–Uhlenbeck SDE
\begin{equation}
     dW_t ~=~ -\kappa\,W_t\,dt + \sigma\,dB_t,
\end{equation}
where $\kappa>0$ (mean-reversion) and $\sigma>0$ (noise scale) are
layer-dependent constants and $B_t$ is standard Brownian motion.  This
assumption is {\em only} invoked by results that explicitly cite
Assumption~\ref{ass:ou-proxy}. All other sections rely only on
Assumptions~\ref{ass:fixed-delta}, \ref{ass:symmetric}, and \ref{ass:lipschitz}.
\end{assumption}

\begin{corollary}[Per-channel extension]
\label{cor:per-channel}
     All results in Sections~\ref{sec:adv-sens}–\ref{sec:adv-mfpt} remain valid under
     channel-wise thresholds $\{\Delta_c\}$ by applying each statement per channel after 
     the substitutions $\Delta\to\Delta_c$ and $p\to p_c$. Aggregating over channels gives:
     \begin{eqnarray}
          E_F &=& \sum_c N\,\mathbb{E}[\Phi_{F,c}(S)], \\
          E_R &=& \sum_c N\,\mathbb{E}[\Phi_{R,c}(S)],
     \end{eqnarray}
     and the distribution-free ratio bound becomes:
     \begin{equation}
          \frac{E_R}{E_F}\ \geq \min_c \frac{p_c(0)}{p_c(\Delta_c)}.
     \end{equation}
     The forward-SNR neutrality (Appendix~\ref{app:snr}) is unchanged because decoding 
     remains $\{-1,0,+1\}$ per channel.
\end{corollary}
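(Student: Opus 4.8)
The plan is to exploit the fact that the SZT encode/decode map of Eq.~\eqref{eq:sztQuantizer}--\eqref{eq:decodeFunction} acts \emph{coordinate-wise}, so nothing in the single-threshold proofs of Sections~\ref{sec:adv-sens}--\ref{sec:adv-mfpt} actually couples distinct channels: each of those arguments only ever touches the latent density near $0$ and near $\pm\Delta$, the symmetry and unimodality of the density, the Lipschitz constant of $\mathcal{L}$, and --- where Assumption~\ref{ass:ou-proxy} is invoked --- the scalar constants $(\kappa,\sigma)$. I would therefore first record a \emph{locality} step: write the weight tensor as a disjoint union of channels $\{w^{(c)}\}$, each equipped with its own threshold $\Delta_c$ and marginal density $p_c$, which remains symmetric, unimodal, with $p_c(0)>0$ and finite variance by applying Assumption~\ref{ass:symmetric} per channel, and (when needed) its own $(\kappa_c,\sigma_c)$ from Assumption~\ref{ass:ou-proxy}. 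Re-running each cited proof verbatim with the substitutions $\Delta\to\Delta_c$, $p\to p_c$, $\kappa\to\kappa_c$, $\sigma\to\sigma_c$ then goes through unchanged, because no such proof used any property of $\Delta$ beyond positivity or any property of the density beyond those listed in the assumptions; every intermediate identity and inequality survives the substitution. This establishes the per-channel validity claim.

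Next I would handle the aggregation identities. Let $S$ denote the random per-step update magnitude. For a fixed channel $c$ with $N$ weights, the expected number of sub-threshold state changes in one step is $N\,\mathbb{E}[\Phi_{F,c}(S)]$ under BT and $N\,\mathbb{E}[\Phi_{R,c}(S)]$ under SZT, purely by linearity of expectation over the identically distributed weights of that channel; summing over $c$ and again using linearity of expectation gives $E_F=\sum_c N\,\mathbb{E}[\Phi_{F,c}(S)]$ and $E_R=\sum_c N\,\mathbb{E}[\Phi_{R,c}(S)]$. No independence across weights or channels is required here since only first moments enter. For the distribution-free ratio bound, I would invoke the per-channel form of the sensitivity result of Section~\ref{sec:adv-sens}, namely $\mathbb{E}[\Phi_{R,c}(S)]\ \geq\ \dfrac{p_c(0)}{p_c(\Delta_c)}\,\mathbb{E}[\Phi_{F,c}(S)]$, and then apply the elementary ratio-of-sums inequality: if $a_c\geq\lambda_c b_c$ with $b_c\geq 0$, then $\sum_c a_c\geq(\min_c\lambda_c)\sum_c b_c$. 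Taking $a_c=N\,\mathbb{E}[\Phi_{R,c}(S)]$, $b_c=N\,\mathbb{E}[\Phi_{F,c}(S)]$, $\lambda_c=p_c(0)/p_c(\Delta_c)$, and dividing through by $E_F>0$ yields $E_R/E_F\geq\min_c p_c(0)/p_c(\Delta_c)$.

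For the forward-SNR neutrality claim I would note that the decode function $v(\cdot)$ is independent of the threshold, so on each channel the forward numeric stream is exactly the $\{-1,0,+1\}$ balanced-ternary stream with threshold $\Delta_c$; the Appendix~\ref{app:snr} computation is thus unchanged channel-by-channel, and summing signal and noise powers over channels preserves the equality. The only place I expect real friction is the PAC--Bayes result, whose bound is naturally phrased over the \emph{joint} prior and posterior on the whole weight vector: the per-channel reduction there requires first observing that with channel-factorized prior and posterior the KL term splits additively over $c$ (product measures) and the empirical-risk term is already a sum over coordinates, after which the same per-channel substitution applies. The MFPT and convergence statements need only the mild bookkeeping of carrying $(\kappa_c,\sigma_c)$ through the Ornstein--Uhlenbeck first-passage formula, introducing no new idea; the remaining work is the linearity-of-expectation aggregation and the ratio-of-sums inequality above.
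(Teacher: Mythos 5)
Your proposal is correct and follows essentially the route the paper intends (and in fact leaves implicit, since the corollary is stated without a written proof): per-channel substitution $\Delta\to\Delta_c$, $p\to p_c$ (and $\kappa\to\kappa_c$, $\sigma\to\sigma_c$ where the OU proxy is used) in each coordinate-wise argument, linearity of expectation for the aggregated counts, and the ratio-of-sums (mediant) inequality combined with the per-channel form of Proposition~\ref{prop:ratio} to obtain the $\min_c p_c(0)/p_c(\Delta_c)$ bound. The only hypothesis worth stating explicitly is that the sub-threshold regime $S\in(0,\Delta_c)$ must hold almost surely for \emph{every} channel, so that the per-channel sensitivity bounds are valid before you sum.
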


In this section we give various analytic results while deferring all proofs
to the appendices.  We begin by quantifying how likely it is that a {\em tiny}
update will trigger a quantized state transition.

\subsection{Sensitivity analysis: representational vs.\ functional}
\label{sec:adv-sens}

\paragraph{Set-up.}
Consider a single weight $w$ whose latent update at step $t$ is
\begin{equation}
     w_{t+1} = w_{t}-s_{t}, ~~~ s_{t}\equiv\alpha_{t}g_{t},
\quad
0<s_{t}<\Delta,
\end{equation}
where $\alpha_{t}$ is the learning rate and $g_{t}$ the stochastic
gradient.  Let $p(w)$ denote the (symmetric, unimodal) density of $w$
immediately {\em before} the update.

\begin{definition}[Per-step sensitivities]\label{def:sensitivities}
\begin{eqnarray}
\PhiF(s) &=& \Pr\left(\text{$Q$ changes {\em numeric} value}\mid s\right) \\
                &=& 2\int_{\Delta-s}^{\Delta} p(w)~dw~\PhiR(s) \\
                &=& \Pr\left(\text{$Q$ changes {\em representation} only}\mid s\right) \\
                &=& 2\int_{0}^{s} p(w)~dw .
\end{eqnarray}
\end{definition}
$\Phi_F$ captures $\pm 1\leftrightarrow 0$ transitions (forward-path
changes), whereas $\Phi_R$ captures $\zpos\leftrightarrow \zneg$
sign flips unique to SZT, where $\Phi_R\equiv 0$ for balanced ternary.

\paragraph{Sensitivity gap.}
\begin{proposition}[Distribution-independent ratio]
\label{prop:ratio}
For any symmetric, unimodal density $p(w)$ and any sub-threshold step
$0<s<\Delta$:
\begin{equation}
\label{eq:ratio-main}
     \frac{\Phi_R(s)}{\Phi_F(s)} ~\geq~ \frac{p(0)}{p(\Delta)} ~>~ 1.
\end{equation}
\end{proposition}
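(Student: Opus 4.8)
The plan is to argue directly from the two integral formulas in Definition~\ref{def:sensitivities}: $\PhiR(s)=2\int_{0}^{s}p(w)\,dw$ and $\PhiF(s)=2\int_{\Delta-s}^{\Delta}p(w)\,dw$ are integrals of the \emph{same} density $p$ over windows of equal width $s$, one abutting the mode $w=0$ and the other abutting the threshold $w=\Delta$. By Assumption~\ref{ass:symmetric}, $p$ is non-increasing on $[0,\infty)$; this monotonicity, together with $p(0)>0$, is the only structural input I expect to need.

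First I would reflect the second integral via $w\mapsto\Delta-w$, writing $\PhiF(s)=2\int_{0}^{s}p(\Delta-w)\,dw$, so that both quantities live on the common interval $[0,s]$. Comparing the two equal-width windows under monotonicity of $p$ already gives the crude bound $\PhiR(s)\geq\PhiF(s)$, i.e.\ a ratio of at least $1$ (termwise when $s\leq\Delta/2$, and by peeling off the overlapping middle and comparing the two length-$(\Delta-s)$ tails when $s>\Delta/2$). To recover the named constant I would extract endpoint densities: by a mean-value argument $\PhiR(s)=2s\,p(\xi_R)$ for some $\xi_R\in(0,s)$ and $\PhiF(s)=2s\,p(\xi_F)$ for some $\xi_F\in(\Delta-s,\Delta)$, hence $\PhiR(s)/\PhiF(s)=p(\xi_R)/p(\xi_F)$, and one wants to drive $\xi_R$ toward $0$ and $\xi_F$ toward $\Delta$. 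Checking $s\to0^{+}$ confirms that $p(0)/p(\Delta)$ is the natural constant, and strict unimodality with $p(0)>0$ gives $p(0)/p(\Delta)>1$.

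The step I expect to be the main obstacle is precisely this last sharpening. Bare monotonicity only sandwiches $p(\xi_R)/p(\xi_F)$ between $1$ and $p(0)/p(\Delta)$, and $p(0)/p(\Delta)$ is approached only in the degenerate limit $s\to0^{+}$, so controlling the ratio uniformly for \emph{every} $s\in(0,\Delta)$ appears to require an extra hypothesis or a more delicate rearrangement of the two window integrals rather than a pointwise estimate. The most promising route is to impose (or to verify for the priors highlighted in Section~\ref{sec:threshold}, such as the Gaussian and Laplace) a log-concavity-type condition on $p$ that controls its decay across $[0,\Delta]$ tightly enough to replace the weak termwise inequality $p(w)\geq p(\Delta-w)$ by the stronger $p(w)/p(0)\geq p(\Delta-w)/p(\Delta)$; integrating that comparison over $w\in(0,s)$ and dividing would then yield the claimed bound, with the strict inequality $p(0)/p(\Delta)>1$ immediate from $p(0)>p(\Delta)$.
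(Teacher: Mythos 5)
Your analysis is essentially correct, and the ``main obstacle'' you flag is not a defect of your argument but of the statement itself: under symmetry and unimodality alone, the inequality in Proposition~\ref{prop:ratio} is \emph{reversed}. As you note, the mean-value representation gives $\PhiR(s)/\PhiF(s)=p(\xi_R)/p(\xi_F)$ with $\xi_R\in(0,s)$ and $\xi_F\in(\Delta-s,\Delta)$, and monotonicity of $p$ on $[0,\infty)$ yields the two-sided bound
\begin{equation*}
  \frac{p(s)}{p(\Delta-s)}\ \leq\ \frac{\PhiR(s)}{\PhiF(s)}\ \leq\ \frac{p(0)}{p(\Delta)},
\end{equation*}
so $p(0)/p(\Delta)$ is an \emph{upper} bound, approached only as $s\to 0^{+}$; the only distribution-free lower bound valid for all $s\in(0,\Delta)$ is $\PhiR(s)/\PhiF(s)\geq 1$, which your window-reflection and overlap-peeling argument establishes correctly. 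The paper's own Laplace closed form (Eq.~\ref{eq:laplace-ratio}) confirms this: it simplifies to $e^{(\Delta-s)/b}$, which is strictly \emph{less} than $p(0)/p(\Delta)=e^{\Delta/b}$ for every $s>0$. The proof in Appendix~\ref{app:proof-ratio} asserts $\PhiR(s)\geq 2s\,p(0)$ and $\PhiF(s)\leq 2s\,p(\Delta)$, both of which contradict its own Lemma~\ref{lem:sandwich} (from $p(0)\geq p(x)\geq p(\Delta)$ one gets $\PhiR(s)\leq 2s\,p(0)$ and $\PhiF(s)\geq 2s\,p(\Delta)$). This error propagates to Corollary~\ref{cor:events} and to the inequality step of Lemma~\ref{lem:renewal}.

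One correction to your proposed repair: the pointwise inequality $p(w)/p(0)\geq p(\Delta-w)/p(\Delta)$ cannot hold for any density that is strictly decreasing on $[0,\Delta]$, log-concave or otherwise, because its left-hand side is $\leq 1$ while its right-hand side is $\geq 1$; equality throughout would force $p$ to be constant on the relevant intervals. So no shape hypothesis of that kind rescues the stated bound. The proposition should either be restated with ``$\leq$'' (or as the small-step limit $\lim_{s\to 0^{+}}\PhiR(s)/\PhiF(s)=p(0)/p(\Delta)$), or its lower bound weakened to $\max\{1,\,p(s)/p(\Delta-s)\}$, with the downstream corollaries adjusted accordingly.
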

Because empirical weight histograms typically satisfy $p(0)\gg p(\Delta)$
\cite{frantar2022gptq}, the actual ratio will likely be substantially
larger.  A Laplace prior with the MSE-optimal threshold
$\Delta=\sigma$ gives the closed form:
\begin{equation}
\label{eq:laplace-ratio}
     \frac{\Phi_R(s)}{\Phi_F(s)} ~=~ e^{\Delta/b}\,
     \frac{1 - e^{-s/b}}{e^{s/b} - 1}.
\end{equation}

\paragraph{Averaging over step sizes.}
Assume the small-step regime $S\in(0,\Delta)$ almost surely. For a Laplace prior
$p(w)=\tfrac{1}{2b}e^{-|w|/b}$:
\begin{eqnarray}
     \PhiR(s) &=& 1-e^{-s/b}, ~\text{and}  \\
     \Phi_F(s) &=& e^{-\Delta/b}\big(e^{s/b}-1\big), 
\end{eqnarray}
so integrating over $S\sim f$ gives:
\begin{eqnarray}
\label{eq:ratio-mgf}
     \frac{\mathbb{E}[\Phi_R(S)]}{\mathbb{E}[\Phi_F(S)]}
     &=& e^{\Delta/b}\,\frac{1-\mathbb{E}\!\left[e^{-S/b}\right]}{\mathbb{E}\!\left[e^{S/b}\right]-1} \\
     &=& e^{\Delta/b}\,\frac{1-\mathcal{M}_S(-1/b)}{\mathcal{M}_S(1/b)-1},
\end{eqnarray}
where $\mathcal{M}_S(\theta)=\mathbb{E}[e^{\theta S}]$ is the MGF of $S$.
For a deterministic step $S\equiv s_0$, this reduces to the closed form
$e^{\Delta/b}\frac{1-e^{-s_0/b}}{e^{s_0/b}-1}$.

We can also examine the expected feedback events per epoch.
Let an {\em epoch} consist of $N$ SGD steps per parameter, and let
$S\sim f(s)$ denote the random step magnitude distribution observed in
practice (typically heavy-tailed with $\mathbb{E}[S]\ll\Delta$).
Now let:
\begin{eqnarray}
     E_F &=& N\,\mathbb{E}\left[\PhiF(S)\right], \\
     E_R &=& N\,\mathbb{E}\left[\PhiR(S)\right],
\end{eqnarray}
be the expected number of numeric versus representational transitions, respectively, 
that a single weight experiences in one epoch.

\begin{corollary}[Feedback multiplier]
\label{cor:events}
Under the same conditions as Proposition~\ref{prop:ratio},
\begin{equation}
     \frac{E_R}{E_F} ~\geq~ \frac{p(0)}{p(\Delta)}.
\end{equation}
For transformer layers with $p(0)/p(\Delta) \approx 30$
\cite{frantar2022gptq}, a typical weight delivers
$30\times$ more sign transitions that provide state 
information to the optimizer per epoch under SZT
than under balanced ternary even though they have identical 
forward-path alphabets.
\end{corollary}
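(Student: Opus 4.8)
The plan is to treat Corollary~\ref{cor:events} as the expectation-level shadow of Proposition~\ref{prop:ratio}; no new machinery is needed. First I would note that $E_F = N\,\mathbb{E}[\Phi_F(S)]$ and $E_R = N\,\mathbb{E}[\Phi_R(S)]$ differ from the per-step sensitivities only by the common epoch factor $N$, so the claim is equivalent to $\mathbb{E}[\Phi_R(S)]/\mathbb{E}[\Phi_F(S)] \geq p(0)/p(\Delta)$ and $N$ drops out immediately.

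Next I would invoke Proposition~\ref{prop:ratio} in its pointwise form: for every fixed sub-threshold step $s\in(0,\Delta)$ we have $\Phi_R(s)\geq \tfrac{p(0)}{p(\Delta)}\,\Phi_F(s)$. Since the step magnitude $S$ lies in $(0,\Delta)$ almost surely under the small-step regime assumed just before the corollary, this inequality holds $S$-almost surely, and monotonicity of expectation gives $\mathbb{E}[\Phi_R(S)]\geq \tfrac{p(0)}{p(\Delta)}\,\mathbb{E}[\Phi_F(S)]$. Dividing by $\mathbb{E}[\Phi_F(S)]$ (and multiplying back by $N$) yields the stated bound; the numeric ``$30\times$'' remark is then just the substitution $p(0)/p(\Delta)\approx 30$ from the cited histograms and needs no argument.

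\paragraph{Anticipated obstacle.}
There is no genuinely hard step: the entire content sits in Proposition~\ref{prop:ratio}, and the corollary merely pushes that inequality through $\mathbb{E}[\cdot]$. The only point requiring care is well-definedness of the ratio. I would state explicitly that $S$ is not degenerate at $0$ and that $p$ places positive mass arbitrarily close to $\Delta$ from below, which makes $\mathbb{E}[\Phi_F(S)]>0$. For the boundary case $p(\Delta)=0$ (possible under a compactly supported unimodal prior) the right-hand side is $+\infty$; here I would observe the bound degenerates gracefully, since $\Phi_F$ — hence $E_F$ — vanishes on the support of $S$ while $\Phi_R$ does not, so the inequality holds in the extended reals. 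With these caveats noted, the proof is a two-line consequence of the proposition.
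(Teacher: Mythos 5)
Your proposal is correct and takes essentially the same route as the paper: the corollary is just the expectation-level consequence of Proposition~\ref{prop:ratio}, and the paper (which only spells this step out inside the proof of Lemma~\ref{lem:renewal}) gets it from the same sandwich bounds $\PhiR(s)\geq 2s\,p(0)$ and $\PhiF(s)\leq 2s\,p(\Delta)$, taking expectations and cancelling $2\mathbb{E}[S]$. Your variant --- pushing the pointwise ratio through $\mathbb{E}[\cdot]$ rather than bounding numerator and denominator separately --- is an immaterial difference, and your attention to $\mathbb{E}[\PhiF(S)]>0$ and the degenerate case $p(\Delta)=0$ is a small point of care the paper omits.
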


The proofs of Proposition~\ref{prop:ratio} and
Corollary~\ref{cor:events} appear only in
Appendix~\ref{app:sens-entropy}, i.e., no distributional assumption
beyond symmetry and unimodality is required.  

In summary, a relatively low frequency of deterministic SZT-enabled sign flips
is sufficient to keep optimizer momentum ``alive'' inside the dead zone.
We now proceed to examine the downstream consequences in terms of
reduced STE bias and shorter mean first-passage time relative to
balanced ternary.

\subsection{Information-theoretic advantage: the entropy gap}
\label{sec:adv-entropy}

The extra code-word used by Signed-Zero Ternary carries {\em one bit of
sign information} whenever a latent weight falls inside the dead zone.
We now quantify that gain in terms of Shannon entropy:

\begin{definition}[State probabilities]
\label{def:state-probs}
Let
\begin{equation}
     P_0 ~=~ \Pr\left(|w| \leq \Delta\right)
\end{equation}
be the probability mass in the dead zone, and set
$P_{+}=P_{-}=\frac{1}{2}(1-P_0)$ by symmetry.
\end{definition}

Balanced ternary (BT) allocates a single state to~$0$,
whereas Signed-Zero Ternary (SZT) splits that mass equally
between two signed zeroes.  Denote the resulting entropies by
$\HBT$ and $HSZT$.

\begin{proposition}[Entropy gap]\label{prop:entropy-gap}
For any symmetric weight distribution satisfying
Definition~\ref{def:state-probs},
\begin{equation}
\label{eq:entropy-gap}
     \boxed{~\HSZT - \HBT ~=~ P_{0} ~~ \text{bits}.}
\end{equation}
\end{proposition}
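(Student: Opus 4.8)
The plan is to compute both entropies directly from the state probabilities and subtract. Under Balanced Ternary the three states $\{-1,0,+1\}$ carry probabilities $(P_-,P_0,P_+) = \bigl(\tfrac{1-P_0}{2},\,P_0,\,\tfrac{1-P_0}{2}\bigr)$, so
\[
     \HBT ~=~ -P_0\log_2 P_0 ~-~ 2\cdot\frac{1-P_0}{2}\log_2\frac{1-P_0}{2}.
\]
Under SZT the dead-zone mass $P_0$ is split evenly into $\zpos$ and $\zneg$, each with probability $P_0/2$, while $P_+$ and $P_-$ are unchanged, giving a four-state distribution $\bigl(\tfrac{1-P_0}{2},\,\tfrac{P_0}{2},\,\tfrac{P_0}{2},\,\tfrac{1-P_0}{2}\bigr)$ and
\[
     \HSZT ~=~ -2\cdot\frac{P_0}{2}\log_2\frac{P_0}{2} ~-~ 2\cdot\frac{1-P_0}{2}\log_2\frac{1-P_0}{2}.
\]

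The key step is then to subtract: the two $\log_2\frac{1-P_0}{2}$ terms are identical in both expressions and cancel, leaving
\[
     \HSZT - \HBT ~=~ -P_0\log_2\frac{P_0}{2} ~+~ P_0\log_2 P_0 ~=~ P_0\bigl(\log_2 P_0 - \log_2 P_0 + \log_2 2\bigr) ~=~ P_0.
\]
Equivalently, one can observe that SZT is obtained from BT by refining the single ``zero'' symbol into a binary sub-alphabet; by the chain rule for entropy, $\HSZT = \HBT + P_0\, H(\text{sign}\mid |w|\le\Delta)$, and since the two signed zeros are equiprobable by Assumption~\ref{ass:symmetric}, the conditional entropy is exactly $1$ bit, which recovers the same result. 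I would present the direct subtraction as the main argument and mention the chain-rule view as a one-line sanity check.

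There is essentially no obstacle here beyond bookkeeping; the only point requiring care is the appeal to symmetry (Assumption~\ref{ass:symmetric}) to justify $P_+ = P_-$ and, more importantly, the $1:1$ split of $P_0$ between $\zpos$ and $\zneg$ — this is what makes the conditional sign entropy a full bit rather than something smaller, and it is exactly the content of Definition~\ref{def:state-probs}. I would also note in passing that the identity holds for \emph{every} value of $P_0\in[0,1]$, so no small-step or heavy-tail assumption enters; the entropy gap is governed solely by how much latent mass sits in the dead zone, which dovetails with the earlier remark that empirical LLM histograms place substantial mass near zero.
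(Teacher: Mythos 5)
Your proof is correct and follows essentially the same route as the paper's: write both entropies explicitly in terms of $P_0$ and $P_\pm$, note that the $P_\pm$ terms cancel, and simplify $-P_0\log_2(P_0/2)+P_0\log_2 P_0$ to $P_0$. The chain-rule remark is a nice sanity check but does not change the argument, which matches Appendix~\ref{app:proof-entropy-gap} step for step.
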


\begin{proof}[Sketch (full derivation in Appendix~\ref{app:sens-entropy})]
Write the two entropies explicitly,
\begin{equation}
     \HBT ~=~ -P_{-}\log_2 P_{-}-P_{0}\log_2 P_{0}-P_{+}\log_2 P_{+}
\end{equation}
and
\begin{equation}
     \HSZT ~=~ -P_{-}\log_2 P_{-}-2\left(\frac{P_0}{2}\right)
               \log_2\left(\frac{P_0}{2}\right) -P_{+}\log_2 P_{+}.
\end{equation}
All terms cancel except those involving $P_0$, leaving:
\begin{equation}
     \HSZT - \HBT ~=~ P_0.
\end{equation}
\end{proof}

We note that Equation~\ref{eq:entropy-gap} is {\em distribution-free} and scales
linearly with the dead-zone mass.  Transformer checkpoints typically
satisfy $P_{0}\in[0.15,\,0.35]$ \cite{frantar2022gptq}, implying an
extra $0.15-0.35$ bits per weight relative to
balanced ternary.  Section~\ref{sec:adv-pacbayes} converts this surplus
into a tighter PAC-Bayes generalization bound. A tighter bound does not by 
itself guarantee lower true risk, so we give it only as supporting evidence.

\subsection{PAC–Bayes corollary: tighter generalization bound}
\label{sec:adv-pacbayes}

\paragraph{Setting.}
Let $\mathcal{H}$ be the class of predictors realized by a network whose
weights follow a {\em quantized posterior} $Q$ and let $P$ be a fixed
{\em quantized prior} chosen {\em before} seeing the training data
$\mathcal{D}=\{(x_i,y_i)\}_{i=1}^{N}$.  
For any loss function bounded in $[0,1]$
the PAC–Bayes theorem of \cite{mcallester1999pac} gives, with
probability at least $1-\delta$, over the draw of
$\mathcal{D} \sim \mathcal{D}^{N}$:
\begin{equation}
\label{eq:pacbayes-basic}
     \mathcal{L}(Q) ~leq~ \widehat{\mathcal{L}}_{\mathcal{D}}(Q)
                     +\sqrt{\frac{\mathrm{KL}(Q\Vert P)+\ln\frac{2\sqrt{N}}{\delta}}{2(N-1)}},
\end{equation}
where $\mathcal{L}$ and $\widehat{\mathcal{L}}_{\mathcal{D}}$ denote the
true and empirical risks, respectively.

\paragraph{Prior / posterior choice.}
Following the “prior = noise‐inflated posterior’’ heuristic
\cite{dziugaite2017pacbayes}, we take $P$ to be the {\em same}
quantization rule as $Q$ but applied to an {\em i.i.d.\ initialization}
$w^{(0)} \sim p_0$.  This reduces the KL term reduces to
a sum over independent weights:
\begin{equation}
     \mathrm{KL}(Q\Vert P) ~=~ \sum_{j=1}^{d}
                           \mathrm{KL} \left(Q_j\Vert P_j\right),
\end{equation}
$d$ being the number of parameters.

\paragraph{Effect of the SZT entropy gap.}
From Proposition~\ref{prop:entropy-gap},
each weight that lands in the dead zone contributes
$P_{0}\ln 2$ fewer nats of information under SZT than under balanced
ternary because the signed-zero split exactly matches the posterior mass
(see Appendix~\ref{app:sens-entropy}).  
Thus, for the same dataset and optimizer trajectory:
\begin{equation}
\label{eq:kl-diff}
     \mathrm{KL}_{\textsc{szt}} ~=~ 
               \mathrm{KL}_{\textsc{bt}} ~-~ d\,P_{0}\ln 2.
\end{equation}

\begin{corollary}[PAC–Bayes gap]\label{cor:pacbayes}
Under the symmetric-prior construction above,
the expected generalization error of an SZT-quantized network obeys
\begin{equation}
     \mathcal{L}(\QSZT) ~\leq~
            \widehat{\mathcal{L}}_{\mathcal{D}}(\QSZT)
                   +\sqrt{\frac{\mathrm{KL}_{\textsc{bt}}-dP_{0}\ln 2
                   +\ln\frac{2\sqrt{N}}{\delta}}{2(N-1)}},
\end{equation}
whereas balanced ternary uses
$\mathrm{KL}_{\textsc{bt}}$ in the numerator,
so the bound tightens by:
\begin{equation}
\label{eq:pb-gap}
\boxed{~
     \Delta_{\text{PB}} ~=~\sqrt{\frac{d\,P_{0}\ln 2}{2(N-1)}}}
\end{equation}
in absolute risk {\em without} changing the empirical loss term.
\end{corollary}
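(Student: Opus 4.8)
The argument is a direct substitution into the PAC--Bayes inequality~\eqref{eq:pacbayes-basic}, so the real work is in two structural facts --- the invariance of the empirical term and the $d P_0 \ln 2$ drop in the KL --- plus one elementary inequality. The plan is to apply~\eqref{eq:pacbayes-basic} separately to the posteriors $\QSZT$ and $\QBT$ under the shared symmetric-prior construction, and then subtract the two resulting bounds.

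First I would establish that the empirical risk term is literally the same object for $\QSZT$ and $\QBT$. Since the decode map satisfies $v(\zpos)=v(\zneg)=0$ (Eq.~\eqref{eq:decodeFunction}), the push-forward of the signed-zero posterior through $v$ coincides, weight-for-weight, with the balanced-ternary posterior on the dead-zone mass and is identical outside the dead zone; hence the stochastic predictor $h\sim Q$ induces the same law over realized functions in both cases, and $\widehat{\mathcal L}_{\mathcal D}(\QSZT)=\widehat{\mathcal L}_{\mathcal D}(\QBT)$ (and likewise $\mathcal L(\QSZT)=\mathcal L(\QBT)$ for the true risk). This is the ``GEMM unchanged'' observation of Section~\ref{sec:impl} lifted to the level of distributions over predictors.

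Next I would import the KL accounting. By independence of the per-weight posteriors and priors, $\mathrm{KL}(Q\Vert P)=\sum_j \mathrm{KL}(Q_j\Vert P_j)$; the summands for weights outside the dead zone are identical for the two schemes, and for each of the (expected) $d P_0$ dead-zone weights the signed-zero split of $Q_j$ is mirrored by the corresponding split of $P_j$ under the construction, so each such weight costs exactly $\ln 2$ fewer nats than the collapsed balanced-ternary state --- the nat-valued restatement of the entropy gap of Proposition~\ref{prop:entropy-gap}, which is precisely Eq.~\eqref{eq:kl-diff}, $\mathrm{KL}_{\textsc{szt}}=\mathrm{KL}_{\textsc{bt}}-d P_0\ln 2$. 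Substituting this into~\eqref{eq:pacbayes-basic} and writing $A=\dfrac{\mathrm{KL}_{\textsc{bt}}-d P_0\ln 2+\ln(2\sqrt N/\delta)}{2(N-1)}$ and $\beta=\dfrac{d P_0\ln 2}{2(N-1)}$, the two bounds read $B_{\textsc{bt}}=\widehat{\mathcal L}_{\mathcal D}(\QBT)+\sqrt{A+\beta}$ and $B_{\textsc{szt}}=\widehat{\mathcal L}_{\mathcal D}(\QSZT)+\sqrt{A}$, with equal empirical terms by the previous step; subadditivity of the square root, $\sqrt{A+\beta}\le\sqrt A+\sqrt\beta$, then gives $B_{\textsc{bt}}-B_{\textsc{szt}}\le\sqrt\beta=\Delta_{\text{PB}}$ as in Eq.~\eqref{eq:pb-gap}, with equality approached in the regime where the residual complexity term $A$ is negligible, i.e., when the dead-zone contribution dominates the KL. (Equivalently and exactly, the radicand's numerator shrinks by $d P_0\ln 2$, which is the sense in which the bound ``tightens by $\Delta_{\text{PB}}$''.)

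The main obstacle is not the final inequality but the rigorous justification of Eq.~\eqref{eq:kl-diff}: one must verify that the prior $P$ can be taken genuinely data-independent yet assign the two signed-zero code-words in exactly the proportions used by the posterior $Q_j$ (e.g.\ the $50/50$ split forced by symmetry of $p_0$), so that the extra sign bit carried by $\QSZT$ is ``free'' in the KL rather than an added $\mathrm{KL}(Q_j\Vert P_j)$ penalty; the entropy-gap bookkeeping of Appendix~\ref{app:sens-entropy} is what must be shown to make this precise. A secondary consistency point to flag is that $\mathrm{KL}_{\textsc{szt}}\ge 0$, so Eq.~\eqref{eq:kl-diff} is only literally an equality when $\mathrm{KL}_{\textsc{bt}}\ge d P_0\ln 2$; otherwise it should be read as the inequality $\mathrm{KL}_{\textsc{szt}}\le\mathrm{KL}_{\textsc{bt}}-d P_0\ln 2$ (capped at $0$), which still suffices for the stated conclusion.
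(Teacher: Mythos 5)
Your proposal follows essentially the same route as the paper's Appendix G: apply McAllester's bound to both posteriors under the shared prior construction, substitute the per-weight KL reduction $\mathrm{KL}_{\textsc{szt}}=\mathrm{KL}_{\textsc{bt}}-d\,P_{0}\ln 2$ (the paper's Lemma on per-weight KL and its summed corollary, which you correctly trace back to the entropy gap of Proposition on the entropy gap), and subtract. Two points distinguish your write-up, both to your credit. First, you make explicit that the empirical (and true) risk terms coincide because the decode map $v$ collapses $\zpos$ and $\zneg$ to the same numeric value, so the two posteriors induce the same law over realized predictors; the paper asserts ``without changing the empirical loss term'' but never argues it. Second, and more substantively, you handle the subtraction honestly: since $\sqrt{A+\beta}-\sqrt{A}\leq\sqrt{\beta}$ with equality only as $A\to 0$, the quantity $\Delta_{\text{PB}}=\sqrt{d\,P_{0}\ln 2/(2(N-1))}$ is an \emph{upper bound} on the absolute tightening, not the tightening itself. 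The paper's proof (``subtract the two bounds \dots{} and simplify to obtain the gap'') silently treats the difference of square roots as the square root of the difference, so the boxed claim that the bound tightens ``by $\Delta_{\text{PB}}$ in absolute risk'' is an overstatement that your subadditivity step corrects. Your closing caveats --- that the prior must genuinely split the idle code-word $50/50$ for the $\ln 2$ to be free rather than a KL penalty, and that the identity for $\mathrm{KL}_{\textsc{szt}}$ must be capped at zero --- are exactly the technical conditions the paper defers to its appendix (where only $P_{0}<\tfrac12$ is mentioned), so flagging them is appropriate rather than a gap in your argument.
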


\paragraph{Proof outline.}
Combine 
Equation~\ref{eq:kl-diff}) with McAllester’s bound
(Eq.~\ref{eq:pacbayes-basic}) and subtract.  
Full details, including the mild technical condition
$P_{0}<\frac{1}{2}$ to avoid vacuous priors are in
Appendix~\ref{app:pacbayes}.  \hfill$\square$

\subsection{Gradient-estimator MSE: bias and variance}
\label{sec:adv-mse}

Signed-Zero Ternary (SZT) eliminates the “blind spot’’ of balanced
ternary without introducing the stochastic noise of rounding-based
schemes.  We make this precise by comparing the
{\em mean-squared error} of three straight-through estimators (STE)
inside the dead zone $|w|\leq\Delta$:

\begin{equation}
     \MSE \left[\widehat g\right] ~=~
                 \underbrace{\left\lVert\mathbb{E}\,[\widehat g]-g\right\rVert^{2}}_{\text{bias}^{2}}
          ~+~
                 \underbrace{\mathbb{E}\left\lVert\widehat g-\mathbb{E}[\widehat g]\right\rVert^{2}}_{\text{variance}},
\end{equation}
where $g=\nabla_{q}\mathcal{L}$ is the upstream gradient.

\begin{table}[t]
  \centering
  \small
  \caption{Bias and additional variance of three 2-bit straight-through
           estimators inside the dead zone $|w|\leq\Delta$.
           SR probabilities follow \cite{zhang2020bitrandom}.}
  \label{tab:bias-var}
  \begin{tabular}{@{}lcc@{}}
    \toprule
    {\bf Scheme} &
    {\bf Bias} $\left\lVert\mathbb{E}[\widehat g]-g\right\rVert$ &
    {\bf Extra variance} \\ \midrule
    Balanced ternary (BT) & $\lVert g\rVert$ &
                           $0$ \\
    Stochastic rounding (SR) & $0$ &
                               $\leq\frac14\,\Delta^{2}\lVert g\rVert^{2}$ \\
    {\bf Signed-Zero Ternary (SZT)} &
          $\dfrac{|w|}{\Delta}\,\lVert g\rVert$ &
          $0$ \\ \bottomrule
  \end{tabular}
\end{table}

\begin{theorem}[Bias and variance bounds]\label{thm:bias-var}
Let $|w|\leq\Delta$ and assume the loss $\mathcal{L}$ is $L$-Lipschitz
in $w$ with $L \leq 1$ (w.l.o.g.\ after rescaling), then:
\begin{eqnarray}
     \MSE_{\textsc{szt}}
           &\leq& \left(\frac{|w|}{\Delta}\right)^{2}\lVert g\rVert^{2}, \\
     \MSE_{\textsc{bt}}
           &=& \lVert g\rVert^{2}, \\
     \MSE_{\textsc{sr}}
          &\leq& \frac14\,\Delta^{2}\lVert g\rVert^{2}.
\end{eqnarray}
Consequently, if $|w|<\frac{\Delta}{2}$, then:
\begin{equation}
     \MSE_{\textsc{szt}} ~<~ \MSE_{\textsc{sr}} ~<~ \MSE_{\textsc{bt}}.
\end{equation}
\end{theorem}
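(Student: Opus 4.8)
The plan is to verify the three displayed MSE values one estimator at a time via the bias--variance decomposition stated just above, treating the upstream gradient $g=\nabla_q\mathcal L$ as fixed so that ``variance'' records only the randomness an estimator itself injects. Two of the three cases are immediate. For balanced ternary, $\partial\QBT/\partial w=0$ everywhere on the dead zone $|w|\le\Delta$, so the straight-through rule propagates nothing: $\widehat g_{\textsc{bt}}\equiv 0$ is deterministic (zero variance) with bias exactly $\lVert 0-g\rVert=\lVert g\rVert$, giving $\MSE_{\textsc{bt}}=\lVert g\rVert^{2}$ as an equality. For stochastic rounding I would invoke \cite{zhang2020bitrandom}: the estimator is unbiased (bias $0$), and its extra variance has the Bernoulli shape $\tfrac{|w|}{\Delta}\bigl(1-\tfrac{|w|}{\Delta}\bigr)$ times a factor at most $\lVert g\rVert^{2}$ (the two realizable decoded levels differ by at most $1$, and $L\le1$ by Assumption~\ref{ass:lipschitz}), peaking when the rounding is fair at $|w|=\Delta/2$; the corresponding uniform bound is the $\tfrac14\Delta^{2}\lVert g\rVert^{2}$ recorded in Table~\ref{tab:bias-var}, whence $\MSE_{\textsc{sr}}\le\tfrac14\Delta^{2}\lVert g\rVert^{2}$.

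The real work is the signed-zero case, and I expect its bias estimate to be the main obstacle. Because the STE of Eq.~\ref{eq:sztSte} merely multiplies $g$ by the stored sign, $\widehat g_{\textsc{szt}}$ is a deterministic function of $w$: its variance is zero, so $\MSE_{\textsc{szt}}$ equals its squared bias. To bound that bias I would (i) fix the reference gradient explicitly --- the gradient one would obtain if the signed-zero state $\zpos/\zneg$ were decoded to the sub-unit value $w/\Delta\in(-1,1)$ that it conceptually carries, rather than to $0$ for GEMM --- and (ii) show that the $L$-Lipschitz loss ($L\le1$ after the rescaling of Assumption~\ref{ass:lipschitz}) converts this forward discrepancy of magnitude $|w|/\Delta$ into a gradient bias of at most $\tfrac{|w|}{\Delta}\lVert g\rVert$; equivalently, the sign-corrected pass-through is exact at $w=0$ and degrades linearly toward $|w|=\Delta$, where it coincides with balanced ternary. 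Turning a forward-path error into a gradient-bias bound --- and keeping the reference gradient here the \emph{same} one implicit in the $\MSE_{\textsc{bt}}$ equality --- is the one genuinely delicate estimate; granting it, $\MSE_{\textsc{szt}}\le\bigl(\tfrac{|w|}{\Delta}\bigr)^{2}\lVert g\rVert^{2}$.

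It then remains to assemble the ordering under the hypothesis $|w|<\Delta/2$. The right inequality is easy: $\MSE_{\textsc{sr}}\le\tfrac14\Delta^{2}\lVert g\rVert^{2}<\lVert g\rVert^{2}=\MSE_{\textsc{bt}}$, using that the threshold rule $\Delta=\sigma$ of Section~\ref{sec:threshold} keeps $\Delta\le1$ on the normalized checkpoints in scope (equivalently, the SR variance peaks at $\tfrac14\lVert g\rVert^{2}$ in the $\{-1,0,+1\}$ decode). For the left inequality the uniform SR bound is too weak --- it can be as small as SZT's near $w=0$ --- so I would compare the \emph{pointwise} SR variance $\tfrac{|w|}{\Delta}\bigl(1-\tfrac{|w|}{\Delta}\bigr)\lVert g\rVert^{2}$ directly with $\MSE_{\textsc{szt}}\le\bigl(\tfrac{|w|}{\Delta}\bigr)^{2}\lVert g\rVert^{2}$ and note that $\bigl(\tfrac{|w|}{\Delta}\bigr)^{2}<\tfrac{|w|}{\Delta}\bigl(1-\tfrac{|w|}{\Delta}\bigr)$ iff $|w|<\Delta/2$. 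So $|w|<\Delta/2$ is exactly the break-even at which SZT's quadratic-in-$|w|/\Delta$ error overtakes SR's linear-in-$|w|/\Delta$ variance, and chaining the two steps gives $\MSE_{\textsc{szt}}<\MSE_{\textsc{sr}}<\MSE_{\textsc{bt}}$ as claimed.
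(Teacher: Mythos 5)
Your proposal follows essentially the same route as the paper's Appendix~\ref{app:biasVar}: a bias--variance decomposition per estimator, zero variance for the two deterministic schemes (Lemma~\ref{lem:var-det}), bias $\lVert g\rVert$ for BT (Lemma~\ref{lem:bias-bt}), a Lipschitz/Taylor argument giving the $|w|/\Delta$ bias for SZT (Lemma~\ref{lem:bias}), and a distance-weighted rounding-noise bound for SR peaking at $|w|=\Delta/2$ (Lemma~\ref{lem:var-sr}). Two remarks. First, your BT bookkeeping is the mirror image of the paper's --- you take $\widehat g_{\textsc{bt}}\equiv 0$ measured against reference $g$, whereas the paper takes $\widehat g_{\textsc{bt}}=g$ measured against the true (zero) Jacobian gradient --- but both conventions yield bias $\lVert g\rVert$, and your SZT bias argument (reference gradient from decoding to $w/\Delta$) is no less rigorous than the paper's ``divide by the threshold to rescale'' step. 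Second, and more substantively, your treatment of the final ordering is \emph{more careful} than the paper's: the appendix derives the strict chain $\MSE_{\textsc{szt}}<\MSE_{\textsc{sr}}<\MSE_{\textsc{bt}}$ by comparing the three \emph{upper bounds}, which is logically insufficient (from $a\le A$ and $b\le B$ with $A<B$ one cannot conclude $a<b$). You correctly identify this gap on the left inequality and repair it by comparing SZT's bound against the \emph{pointwise} SR variance, for which unbiasedness makes $\MSE_{\textsc{sr}}$ an exact quantity, recovering $|w|<\Delta/2$ as the genuine break-even point. Two caveats on your repair: your pointwise SR variance $\tfrac{|w|}{\Delta}\bigl(1-\tfrac{|w|}{\Delta}\bigr)\lVert g\rVert^{2}$ omits the $\Delta^{2}$ factor implicit in the paper's $\tfrac14\Delta^{2}\lVert g\rVert^{2}$ bound (the two conventions correspond to decoding levels $\{-1,0,+1\}$ versus $\{-\Delta,0,+\Delta\}$, and one must be used consistently on both sides of the comparison); and the right inequality $\tfrac14\Delta^{2}\lVert g\rVert^{2}<\lVert g\rVert^{2}$ requires $\Delta<2$, a condition you flag via the $\Delta=\sigma$ normalization but which the paper silently assumes.
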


\begin{proof}[Sketch: full proof in Appendix~\ref{app:biasVar}]
{\em Bias.}  
Inside the dead zone the BT-STE outputs $g$ while the true Jacobian is
zero, giving bias $\lVert g\rVert$.  
SZT changes sign at $w=0$, i.e., the
Taylor remainder of an $L$-Lipschitz loss bounded to $L\leq1$ is at
most $|w|/\Delta$ of $\lVert g\rVert$.

{\em Variance.}  
Deterministic schemes (BT, SZT) contribute no extra variance.
For SR, rounding noise is bounded by $\Delta/2$, i.e., multiplying by
$\lVert g\rVert$ gives the stated second moment.  Details follow
\cite{alistarh2018convergence}.
\end{proof}

\begin{corollary}[Ordering of MSE]\label{cor:mse-order}
Under the conditions of Theorem~\ref{thm:bias-var}:
\begin{equation}
     \MSE_{\textsc{szt}}
             < \MSE_{\textsc{sr}}
                   < \MSE_{\textsc{bt}}
\end{equation}
holds for all $|w|<\frac{\Delta}{2}$.
\end{corollary}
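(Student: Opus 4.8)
Corollary~\ref{cor:mse-order} is essentially the final inequality chain of Theorem~\ref{thm:bias-var} restated with the standing hypothesis $|w|<\tfrac{\Delta}{2}$ pulled to the front, so the plan is to cite that theorem and, for completeness, reassemble the two strict comparisons from the three bounds it provides. No fresh estimate is needed; all the analytic content (the $|w|/\Delta$ bias bound for SZT, the $\tfrac14\Delta^2$ extra-variance bound for SR, and the unit bias of BT) is imported wholesale.

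Concretely, I would first quote the three consequences of Theorem~\ref{thm:bias-var}: $\MSE_{\textsc{szt}}\le(|w|/\Delta)^2\lVert g\rVert^2$, $\MSE_{\textsc{sr}}\le\tfrac14\Delta^2\lVert g\rVert^2$, and $\MSE_{\textsc{bt}}=\lVert g\rVert^2$. I would then dispose of the degenerate case: if $\lVert g\rVert=0$ all three quantities vanish and the strict ordering is vacuous, so the statement is understood for $\lVert g\rVert>0$ (equivalently, as an ordering of the displayed upper bounds, which is the sense in which it is used in the rest of Section~\ref{sec:adv-mse}).

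Next I would verify the two strict inequalities under the single fixed normalization already in force in Theorem~\ref{thm:bias-var} (weights and threshold scaled consistently with the $L\le1$ rescaling of Assumption~\ref{ass:lipschitz} and the per-layer convention $\Delta=\sigma$ of Section~\ref{sec:threshold}). For the left inequality: $|w|<\tfrac{\Delta}{2}$ forces $(|w|/\Delta)^2<\tfrac14$, hence $\MSE_{\textsc{szt}}$ falls strictly below the worst-case value of the SR bound, yielding $\MSE_{\textsc{szt}}<\MSE_{\textsc{sr}}$. For the right inequality: the SR extra variance $\tfrac14\Delta^2\lVert g\rVert^2$ lies strictly below the BT bias term $\lVert g\rVert^2$, giving $\MSE_{\textsc{sr}}<\MSE_{\textsc{bt}}$. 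Concatenating the two produces the claimed chain, and Corollary~\ref{cor:per-channel} lets it pass verbatim to the channel-wise setting after $\Delta\to\Delta_c$.

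The main obstacle is purely bookkeeping rather than mathematics: the SZT bound is stated through the dimensionless ratio $|w|/\Delta$ whereas the SR bound carries an explicit factor of $\Delta^2$, so the two are only directly comparable once one commits to a common scaling. I would make that commitment explicit — inheriting it from Theorem~\ref{thm:bias-var}, which already asserts the chain, rather than re-deriving it — and then simply check that strictness survives at the boundary configurations $|w|\uparrow\tfrac{\Delta}{2}$ and $w=0$. Everything else is a one-line substitution.
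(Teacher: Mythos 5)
Your proposal is correct and takes essentially the same route as the paper, whose proof of Corollary~\ref{cor:mse-order} (folded into the proof of Theorem~\ref{thm:bias-var} in Appendix~\ref{app:biasVar}) likewise just assembles the three bounds from Lemmas~\ref{lem:bias}, \ref{lem:bias-bt}, \ref{lem:var-sr}, and \ref{lem:var-det} via the bias--variance decomposition and reads off the chain for $|w|<\tfrac{\Delta}{2}$. If anything you are more careful than the source: you explicitly flag the $\lVert g\rVert=0$ degeneracy, the fact that the chain is really an ordering of the stated upper bounds, and the scaling mismatch between the dimensionless $(|w|/\Delta)^{2}$ term and the dimensionful $\tfrac14\Delta^{2}$ term (which makes the two strict inequalities depend on an implicit normalization such as $\Delta=1$) --- all points the paper's one-line argument passes over silently.
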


\begin{proposition}[Average SZT MSE inside the dead zone]
\label{prop:avg-mse}

Condition on $|w|\le\Delta$. Averaging Theorem~\ref{thm:bias-var} over $w$ gives
\begin{equation}
     \mathbb{E} \left[\MSE_{\textsc{szt}}~\middle|~|w|\leq\Delta\right]
            ~\leq~
         \left(\mathbb{E}\left[(|w|/\Delta)^2~\middle|~|w|\leq\Delta\right]\right)~\|g\|^2.
\end{equation}
Two closed forms are:
\begin{eqnarray}
     \text{Laplace, } \sigma=\sqrt{2}\,b:~~
     \mathbb{E} \left[\left(\tfrac{|w|}{\Delta}\right)^2 \middle| |w|\leq\Delta\right] 
           &=&
     \frac{e^{\sqrt{2}\,k}-1-(k^2+\sqrt{2}\,k)}{k^2\left(e^{\sqrt{2}\,k}-1\right)}, \\
     \text{Gaussian } \mathcal{N}(0,\sigma^2):~~
     \mathbb{E}\left[\left(\tfrac{|w|}{\Delta}\right)^2 \middle| |w|\leq\Delta\right]
            &=&
     \frac{1}{k^{2}}-\frac{\sqrt{2}\,e^{-k^{2}/2}}{\sqrt{\pi}\,k\,
     \operatorname{erf}(\tfrac{\sqrt{2}}{2}k)},
\end{eqnarray}
where $k\equiv \Delta/\sigma$.
\end{proposition}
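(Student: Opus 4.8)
The plan is to treat the statement in two independent pieces: the conditional–expectation inequality, which transfers verbatim from Theorem~\ref{thm:bias-var}, and the two explicit evaluations of $m(k) \equiv \mathbb{E}\!\left[(|w|/\Delta)^2 \mid |w|\le\Delta\right]$, each of which is a single one–dimensional moment integral handled by elementary antiderivatives. No new structural idea is needed beyond what is already in Section~\ref{sec:adv-mse}; the content is the calculation.

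First I would dispatch the inequality. Theorem~\ref{thm:bias-var} asserts the pointwise bound $\MSE_{\textsc{szt}}(w) \le (|w|/\Delta)^{2}\lVert g\rVert^{2}$ for every $w$ with $|w|\le\Delta$, with the upstream gradient $g$ held fixed throughout the dead-zone analysis. Conditioning on the event $\{|w|\le\Delta\}$ and applying monotonicity and linearity of conditional expectation gives $\mathbb{E}[\MSE_{\textsc{szt}}\mid |w|\le\Delta] \le m(k)\,\lVert g\rVert^{2}$, which is the first display; this step uses no distributional assumption. It then remains to compute $m(k)=\mathbb{E}[w^{2}\mid|w|\le\Delta]/\Delta^{2}$ under the two priors, which by Assumption~\ref{ass:symmetric} (symmetry) equals $\big(\int_{0}^{\Delta} w^{2}p(w)\,dw\big)\big/\big(\Delta^{2}\int_{0}^{\Delta} p(w)\,dw\big)$.

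For the Laplace prior $p(w)=\tfrac{1}{2b}e^{-|w|/b}$ I would integrate by parts twice to get $\int_{0}^{\Delta} w^{2}e^{-w/b}\,dw = 2b^{3} - e^{-\Delta/b}\big(b\Delta^{2}+2b^{2}\Delta+2b^{3}\big)$, while $\int_{0}^{\Delta} e^{-w/b}\,dw = b\big(1-e^{-\Delta/b}\big)$. Multiplying numerator and denominator by $e^{\Delta/b}$, substituting the paper's convention $\sigma=\sqrt{2}\,b$ so that $\Delta/b=\sqrt{2}\,k$ and $\Delta^{2}=2k^{2}b^{2}$, and cancelling the common $2b^{2}$ collapses the ratio to $\dfrac{e^{\sqrt{2}k}-1-(k^{2}+\sqrt{2}k)}{k^{2}\left(e^{\sqrt{2}k}-1\right)}$, as claimed. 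For the Gaussian prior $w\sim\mathcal N(0,\sigma^{2})$ I would write $w^{2}e^{-w^{2}/2\sigma^{2}} = w\cdot\big(w\,e^{-w^{2}/2\sigma^{2}}\big)$ and integrate by parts using the antiderivative $-\sigma^{2}e^{-w^{2}/2\sigma^{2}}$ of the second factor, obtaining $\int_{0}^{\Delta} w^{2}e^{-w^{2}/2\sigma^{2}}\,dw = -\sigma^{2}\Delta e^{-\Delta^{2}/2\sigma^{2}} + \sigma^{2}\int_{0}^{\Delta} e^{-w^{2}/2\sigma^{2}}\,dw$, with $\int_{0}^{\Delta} e^{-w^{2}/2\sigma^{2}}\,dw = \sigma\sqrt{\pi/2}\,\operatorname{erf}\!\big(\Delta/(\sigma\sqrt2)\big)$ and denominator $\Pr(|w|\le\Delta)=\operatorname{erf}\!\big(\Delta/(\sigma\sqrt2)\big)$. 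Dividing, then dividing by $\Delta^{2}=k^{2}\sigma^{2}$ and using $\tfrac{2}{\sqrt{2\pi}}=\tfrac{\sqrt2}{\sqrt\pi}$ together with $\operatorname{erf}(\Delta/(\sigma\sqrt2))=\operatorname{erf}(\tfrac{\sqrt2}{2}k)$ yields $\dfrac{1}{k^{2}}-\dfrac{\sqrt{2}\,e^{-k^{2}/2}}{\sqrt{\pi}\,k\,\operatorname{erf}(\tfrac{\sqrt2}{2}k)}$.

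There is no analytic obstacle here; the only real care is bookkeeping — in particular keeping the Laplace reparametrization $\sigma=\sqrt2\,b$ consistent with the $\tfrac1{2b}e^{-|w|/b}$ convention used elsewhere in the paper, and tracking the $\sqrt{\pi/2}$ and $2/\sqrt{2\pi}$ constants in the Gaussian normalizer. In the full write-up I would also record the $k\to 0$ limit as a sanity check: both closed forms tend to $1/3$ (the second moment of a variable uniform on $[-1,1]$), which is exactly what a vanishing dead zone forces since a smooth prior with $p(0)>0$ is locally uniform on $[-\Delta,\Delta]$; this catches any dropped constant.
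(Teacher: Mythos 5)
Your proposal is correct: the inequality follows immediately by taking conditional expectation of the pointwise bound in Theorem~\ref{thm:bias-var}, and both moment integrals (integration by parts for the truncated Laplace and Gaussian second moments, followed by the substitutions $\Delta/b=\sqrt{2}\,k$ and $\Delta=k\sigma$) reproduce the stated closed forms exactly, consistent with the paper's quoted values $\approx 0.225$ and $\approx 0.291$ at $k=1$. The paper states this proposition without supplying a proof, and your calculation — including the $k\to 0$ sanity check giving $1/3$ — is precisely the routine verification the authors left implicit.
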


\begin{lemma}[Momentum retention inside the dead zone]
\label{lem:momentum}
     Consider the momentum update $m_{t+1}=\beta m_t+\widehat g_t$ with $\beta\in(0,1)$.
     Inside $|w|\leq\Delta$, let $\widehat g_t$ be the deterministic SZT STE (Eq.~\ref{eq:sztSte})
     and $\widehat g_t\equiv 0$ for BT. If $\mathbb{E}[\,m_t\cdot\widehat g_t\,]\geq 0$
     (no systematic opposition between momentum and gradient), then
     \begin{align}
          \text{\emph{(BT)}}\qquad & \mathbb{E}\!\left[\|m_t\|^{2}\,\middle|\,|w|\leq\Delta\right] \to 0
               \quad \text{as } t\to\infty,\\
               \text{\emph{(SZT)}}\qquad &
                 \liminf_{t\to\infty}\mathbb{E}\!\left[\|m_t\|^{2}\,\middle|\,|w|\leq\Delta\right]
            ~\geq~ \frac{\mathbb{E}\|\widehat g_t\|^{2}}{1-\beta^{2}}.
\end{align}
\end{lemma}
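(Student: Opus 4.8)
The plan is to treat the two cases as driven recursions for the scalar quantity $M_t \triangleq \mathbb{E}[\|m_t\|^2 \mid |w|\le\Delta]$. First I would expand the momentum recursion: from $m_{t+1}=\beta m_t + \widehat g_t$ we get $\|m_{t+1}\|^2 = \beta^2\|m_t\|^2 + 2\beta\, m_t\cdot\widehat g_t + \|\widehat g_t\|^2$. Taking conditional expectations and using the hypothesis $\mathbb{E}[m_t\cdot\widehat g_t]\ge 0$ gives the one-sided inequality $M_{t+1}\ge \beta^2 M_t + \mathbb{E}\|\widehat g_t\|^2$ and, for the BT case where $\widehat g_t\equiv 0$ in the dead zone, the exact relation $M_{t+1}=\beta^2 M_t$.

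For the \emph{BT} statement, $M_{t+1}=\beta^2 M_t$ iterates to $M_t = \beta^{2t}M_{t_0}$ over any stretch the weight remains in the dead zone, and since $\beta\in(0,1)$ this tends to $0$; I would note that the decay is geometric so the conclusion is robust to any fixed finite ``entry'' value of the momentum when the weight first lands in the zone. For the \emph{SZT} statement, I would unroll the inequality $M_{t+1}\ge \beta^2 M_t + c$ with $c\triangleq \mathbb{E}\|\widehat g_t\|^2$ (treating $c$ as the infimum of the per-step second moments, or assuming it is constant as the lemma's notation suggests), obtaining $M_{t}\ge \beta^{2t}M_0 + c\sum_{i=0}^{t-1}\beta^{2i} = \beta^{2t}M_0 + c\,\frac{1-\beta^{2t}}{1-\beta^2}$. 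Dropping the nonnegative first term and letting $t\to\infty$ yields $\liminf_t M_t \ge c/(1-\beta^2)$, which is the claimed bound.

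The main obstacle — and the place where I would be most careful in the full appendix proof — is the handling of the sign flips intrinsic to the SZT STE: the stored sign $\sgn(q)$ changes whenever $w$ crosses $0$, so $\widehat g_t = \sgn(q_t)\,g_t$ can reverse orientation, and one must check that this does not make $\mathbb{E}[m_t\cdot\widehat g_t]$ negative in a way that breaks the recursion. The clean resolution is that this is exactly what the hypothesis $\mathbb{E}[m_t\cdot\widehat g_t]\ge 0$ rules out by assumption, so the recursion argument goes through as stated; the substantive content of the lemma is then the \emph{contrast} with BT, where the surrogate gradient is identically zero and momentum cannot be replenished at all. A secondary technical point is that ``$|w|\le\Delta$'' is an event whose probability evolves over $t$, so strictly speaking $M_t$ is a conditional expectation on a moving event; in the appendix I would either (i) condition on a fixed excursion of the OU proxy (Assumption~\ref{ass:ou-proxy}) during which the weight stays in the dead zone, or (ii) absorb this into the phrase ``inside the dead zone'' as a statement about the conditional dynamics, which is the reading the lemma intends.
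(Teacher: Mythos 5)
Your proposal is correct and follows essentially the same route as the paper's proof: expand $\|m_{t+1}\|^{2}=\beta^{2}\|m_t\|^{2}+2\beta\,m_t\cdot\widehat g_t+\|\widehat g_t\|^{2}$, drop the nonnegative cross term using the stated hypothesis, and iterate the geometric recursion to get $\liminf_t M_t\geq \mathbb{E}\|\widehat g_t\|^{2}/(1-\beta^{2})$ for SZT and $M_t\to 0$ for BT. Your added remarks on the moving conditioning event and the sign flips are reasonable caveats the paper does not address, but they do not change the argument.
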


\begin{proof}
For BT, $\widehat g_t\equiv 0$ in the dead zone, so the claim follows directly from
$\|m_t\|^{2}=\beta^{2}\|m_{t-1}\|^{2}$. For SZT, expand:
\begin{equation}
     \|m_{t+1}\|^{2} ~=~ \beta^{2}\|m_t\|^{2}+\|\widehat g_t\|^{2}+2\beta\,m_t\cdot\widehat g_t,
\end{equation}
then take expectations, drop the nonnegative cross term, and iterate.
\end{proof}

At the practical default $k=1$ ($\Delta=\sigma$), these evaluate to $\approx 0.225$
(Laplace) and $\approx 0.291$ (Gaussian), versus $1$ for BT.

In summary, SZT deterministically replaces the {\em constant} $O(1)$ bias of balanced 
ternary with a {\em distance-to-zero} term that vanishes linearly as
$|w| \rightarrow 0$ and thus does not incur
the $\frac{1}{4}\Delta^{2}$ variance penalty characteristic of stochastic
rounding.  The optimizer therefore receives a stronger gradient signal
precisely in the region where conventional ternary is blind and SR is
noise. As proven in Section~\ref{sec:adv-mfpt}, this leads to stronger convergence.

\subsubsection{Bit-exact reproducibility}
\label{sec:determinism}

The extra state of Signed-Zero Ternary is implemented with a {\em purely
combinational} encode/decode logic, i.e., no injected randomness with
data-dependent branches.  Consequently the entire forward+backward
update map is a deterministic function of the input minibatch, the
current parameter vector, and the global seed that fixes data order.
The following lemma formalizes this property and highlights the
contrast with stochastic-rounding (SR) schemes.

\begin{lemma}[Deterministic reproducibility]\label{lem:repro}
Fix
\begin{enumerate}
     \item an initial weight vector $w_{0}$,
     \item a sequence of minibatches $\mathcal{B}_{1{:}T}=(\mathcal{B}_{1},\dots,\mathcal{B}_{T})$, and
     \item optimizer hyper-parameters
(learning-rate schedule, momentum constants, {\em etc.}).
\end{enumerate}
Let $\{w_{t}^{\textsc{szt}}\}_{t=0}^{T}$ be the parameter trajectory
generated by quantization-aware training with the {\em deterministic}
Signed-Zero Ternary encoder $\QSZT$ and STE defined in
Eq.~\ref{eq:sztSte}. Then
\begin{equation}
     w_{t}^{\textsc{szt}} ~=~ \mathcal{F}_{t}\left(w_{0},\mathcal{B}_{1{:}t}\right)
                                                  ~~\forall\,t\leq T,
\end{equation}
for a {\em deterministic} update map
$\mathcal{F}_{t}:\mathbb{R}^{d}\times(\mathcal{X}\times\mathcal{Y})^{t}\rightarrow\mathbb{R}^{d}$.

By contrast, if $Q$ uses stochastic rounding with independent
coin-flips $\{\xi_{t}\}$, the trajectory:
\begin{equation}
     w_{t}^{\textsc{sr}} ~=~ \mathcal{F}_{t} \left(w_{0},\mathcal{B}_{1{:}t},\xi_{1{:}t}\right)
\end{equation}
is a random variable whose distribution cannot be reproduced unless all
$\xi_{t}$ are recorded and replayed bit-exactly.
\end{lemma}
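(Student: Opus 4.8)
The plan is to prove the SZT half by induction on $t$ over the composition of the deterministic maps that make up one QAT iteration, and to handle the stochastic-rounding contrast by exhibiting the explicit dependence on the coin sequence. Before the induction I would make the propagated state explicit: since $\mathsf{Opt}$ may maintain internal buffers (momentum, Adam moments), I carry an augmented state $\zeta_t=(w_t,\text{opt-state}_t)$ and prove $\zeta_t=\widetilde{\mathcal F}_t(w_0,\mathcal B_{1{:}t})$ for a deterministic $\widetilde{\mathcal F}_t$; the claimed $\mathcal F_t$ is then the projection of $\widetilde{\mathcal F}_t$ onto the $w$-coordinates. The base case is immediate, as $\zeta_0$ is fixed by hypotheses (i)–(iii).

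For the inductive step I would decompose one pass of Algorithm~\ref{alg:qat-szt} into its constituent maps and observe that each is a fixed function of its arguments: the encoder $w_t\mapsto q_t=\QSZT(w_t)$ (Eq.~\ref{eq:sztQuantizer}) is purely combinational; the decoder $q_t\mapsto\tilde w_t=v(q_t)$ (Eq.~\ref{eq:decodeFunction}) is a table lookup; the forward map $(\tilde w_t,x_{\mathcal B_t})\mapsto\hat y$ and the loss $(\hat y,y_{\mathcal B_t})\mapsto\mathcal L_t$ are deterministic once the minibatch $\mathcal B_t$ is fixed; the deterministic STE $(w_t,\nabla_q\mathcal L_t)\mapsto\widehat\nabla_w\mathcal L_t$ (Eq.~\ref{eq:sztSte}) only reads the stored sign $\sgn(q_t)$, which is itself a function of $w_t$; and the optimizer step $(\zeta_t,\widehat\nabla_w\mathcal L_t,\alpha_t)\mapsto\zeta_{t+1}$ is deterministic given the prescribed learning-rate and momentum schedule. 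Composing these maps and precomposing with the inductive hypothesis $\zeta_t=\widetilde{\mathcal F}_t(w_0,\mathcal B_{1{:}t})$ yields $\zeta_{t+1}=\widetilde{\mathcal F}_{t+1}(w_0,\mathcal B_{1{:}t+1})$, closing the induction. The decisive point to emphasize is that nowhere in this chain is a fresh random draw consulted: the only randomness anywhere in the procedure is the global seed that fixes the order of $\mathcal B_{1{:}T}$, which hypothesis (ii) holds fixed.

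For the contrast with stochastic rounding I would note that SR replaces the deterministic encoder by a randomized kernel $Q^{\textsc{sr}}(w,\xi)$ that consumes a fresh independent draw $\xi_t$ at step $t$. Threading this $\xi_t$ through the same composition gives $\zeta_{t+1}=\widetilde{\mathcal F}_{t+1}(w_0,\mathcal B_{1{:}t+1},\xi_{1{:}t+1})$, so the trajectory is a genuine random variable whose law is not a point mass whenever some weight lies in a rounding interval; two runs then coincide almost surely only if the entire sequence $\xi_{1{:}T}$ is logged and replayed bit-for-bit. This is exactly the stated claim.

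I do not expect a real obstacle here — the lemma is structural — but there is one caveat I would state explicitly rather than gloss over: ``bit-exact'' presupposes that the underlying GEMM and elementwise kernels are themselves bit-reproducible on the target hardware (fixed reduction order, no nondeterministic atomics, identical library versions). SZT introduces no new source of nondeterminism beyond what balanced ternary already carries, so the lemma should be read as: holding all lower-level arithmetic conventions fixed, the encode/decode/STE logic preserves reproducibility. Adding a single sentence to this effect keeps the statement from being overread.
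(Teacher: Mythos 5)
Your proof is correct and follows essentially the same route as the paper's: induction on $t$ over the composition of deterministic maps in the encode--decode--forward--STE--optimizer pipeline, with the SR contrast handled by exhibiting the explicit dependence on $\xi_{1:t}$. Your additions --- carrying the augmented optimizer state $\zeta_t$ explicitly and flagging the hardware-level determinism caveat --- are sensible refinements of the paper's sketch rather than a different argument.
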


\begin{proof}[Proof sketch]
A single SGD step under SZT executes the pipeline
\begin{equation}
w_{t}\xrightarrow{\text{encode }\QSZT}
q_{t}\xrightarrow{\text{forward}}
\mathcal{L}_{t}\xrightarrow{\text{backward via STE}}
g_{t}\xrightarrow{\text{optimizer}}
w_{t+1}.
\end{equation}
Every arrow is a deterministic function (matrix multiply, STE sign
test, arithmetic update).  Inductively, the composition of deterministic
maps is deterministic, so $w_{t}^{\textsc{szt}}$ is uniquely
determined by $(w_{0},\mathcal{B}_{1{:}t})$.

Stochastic rounding, by contrast, inserts an explicit random choice
\begin{equation}
     q_{t}=\operatorname{SR}(w_{t},\xi_{t}).
\end{equation}
Unless the entire bitstream $\xi_{1{:}t}$ is fixed, it is possible for two 
runs with identical $(w_{0},\mathcal{B})$ to diverge.  \hfill$\square$
\end{proof}

\paragraph{Practical impact.}
Reproducibility is important for regression testing and hyperparameter
sweeps. More specifically, a failed training run can be re-executed bit-for-bit to debug
numerical issues.  SZT inherits this property from balanced ternary while
retaining the gradient-quality benefits quantified in
Theorem~\ref{thm:bias-var}.  Stochastic-rounding baselines, by contrast,
require checkpointing the RNG state or sacrificing reproducibility.

\subsection{Convergence and mean-first-passage analysis}
\label{sec:adv-mfpt}

The sensitivity gap (Cor.~\ref{cor:events}) and the reduced STE
mean-squared error (Thm.~\ref{thm:bias-var}) suggest that weights 
in the dead-zone interval $[-\Delta,\Delta]$ should be expected to escape
significantly faster under
Signed-Zero Ternary (SZT) than under balanced ternary (BT).
We formalize this intuition with an Ornstein–Uhlenbeck (OU) model for
single\-weight dynamics, and then translate the result into an
iteration-complexity bound for SGD.

\paragraph{OU proxy dynamics (Assumption~\ref{ass:ou-proxy}).}
Let the latent scalar weight evolve as
\begin{equation}
     dW_t ~=~ -\kappa W_t\,dt + \sigma\,dB_t,
\label{eq:ou}
\end{equation}
where $\kappa>0$ is the mean-reversion rate and $\sigma$ the
noise scale induced by stochastic gradients.  This linear SDE is a
standard proxy for local SGD behavior:

\begin{assumption}[Renewal/event-rate approximation]
\label{ass:renewal}
Within the dead-zone $[-\Delta,\Delta]$, the per-step update magnitudes $\{S_t\}$
are i.i.d.\ with distribution $f(s)$ supported in $(0,\Delta)$ and independent of
the instantaneous location of $w_t$. Conditional on $S_t=s$, a forward (numeric)
transition occurs with probability $\PhiF(s)$ and a representational transition
with probability $\PhiR(s)$ (Definition~\ref{def:sensitivities}).
\end{assumption}

\begin{lemma}[Renewal bound on dead-zone waiting time]
\label{lem:renewal}
Under Assumptions~\ref{ass:fixed-delta}, \ref{ass:symmetric} and \ref{ass:renewal},
let $T_F$ (resp.\ $T_R$) be the number of SGD steps until the next forward (resp.\
representational) transition while $|w_t|\leq\Delta$. Then
\begin{eqnarray}
     \mathbb{E}[T_F] &=& \frac{1}{\mathbb{E}[\PhiF(S)]}, \\
     \mathbb{E}[T_R] &=& \frac{1}{\mathbb{E}[\PhiR(S)]},
\end{eqnarray}
and therefore
\begin{eqnarray}
\label{eq:renewal-ratio}
     \frac{\mathbb{E}[T_F]}{\mathbb{E}[T_R]} 
          &=& \frac{\mathbb{E}[\Phi_R(S)]}{\mathbb{E}[\Phi_F(S)]} \\
          &\geq& \frac{p(0)}{p(\Delta)}.
\end{eqnarray}
\end{lemma}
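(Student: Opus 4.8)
The plan is to observe that, under the renewal/event-rate approximation (Assumption~\ref{ass:renewal}), the indicators ``a forward (numeric) transition happened at step $t$'' form a sequence of i.i.d.\ Bernoulli trials, so that $T_F$ is geometrically distributed and $\mathbb{E}[T_F]$ is the reciprocal of the per-step transition probability; the identical argument with $\Phi_R$ in place of $\Phi_F$ handles $T_R$. The ratio identity is then just division of the two means, and the stated lower bound follows by integrating the pointwise estimate of Proposition~\ref{prop:ratio} against the step-size law --- the same maneuver already used to prove Corollary~\ref{cor:events}.

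In detail, first I would fix the step index $t$, condition on the realized step magnitude $S_t = s \in (0,\Delta)$, and invoke Assumption~\ref{ass:renewal} to get conditional transition probability $\Phi_F(s)$, independent of the current location $w_t$ and hence of the event that the trajectory has remained in $[-\Delta,\Delta]$ up to step $t$. Marginalizing over $S_t \sim f$ by the tower property yields the unconditional per-step probability $\bar q_F := \mathbb{E}_{S\sim f}[\Phi_F(S)]$, and since the $\{S_t\}$ are i.i.d.\ and location-independent, these indicator events are i.i.d.\ across $t$. Therefore $T_F$, the first index at which such a transition occurs, is geometric with parameter $\bar q_F$, giving $\mathbb{E}[T_F] = 1/\bar q_F = 1/\mathbb{E}[\Phi_F(S)]$; replacing $\Phi_F$ by $\Phi_R$ throughout gives $\mathbb{E}[T_R] = 1/\mathbb{E}[\Phi_R(S)]$. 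Dividing the two identities gives $\mathbb{E}[T_F]/\mathbb{E}[T_R] = \mathbb{E}[\Phi_R(S)]/\mathbb{E}[\Phi_F(S)]$, the first displayed equality. For the inequality I would apply Proposition~\ref{prop:ratio}, i.e.\ $\Phi_R(s) \ge \frac{p(0)}{p(\Delta)}\Phi_F(s)$ for every $s\in(0,\Delta)$; since $S$ is a.s.\ supported in $(0,\Delta)$, monotonicity of expectation gives $\mathbb{E}[\Phi_R(S)] \ge \frac{p(0)}{p(\Delta)}\mathbb{E}[\Phi_F(S)]$, so the quotient is at least $p(0)/p(\Delta)$.

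The one delicate point --- more a matter of stating things cleanly than a genuine obstacle --- is the independence step: I would be explicit that Assumption~\ref{ass:renewal} decouples the per-step transition indicator from the ``survival in the dead zone'' event defining $T_F$ and $T_R$, so that conditioning on having stayed inside $[-\Delta,\Delta]$ does not reweight the success probability and the geometric structure is exact rather than approximate. If one wanted to relax that assumption (e.g.\ allow the transition probability to depend on $w_t$), the clean geometric identity would degrade into two-sided bounds via stochastic domination by geometric variables with parameters $\inf_{s}\Phi_F(s)$ and $\sup_{s}\Phi_F(s)$, but that lies outside the scope of the lemma as stated.
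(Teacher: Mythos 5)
Your proof is correct and takes essentially the same approach as the paper's: marginalize the Bernoulli transition indicator over the i.i.d.\ step law (Assumption~\ref{ass:renewal}) to obtain a geometric waiting time with parameter $\mathbb{E}[\PhiF(S)]$ (resp.\ $\mathbb{E}[\PhiR(S)]$), divide the two means, and bound the resulting ratio. The only cosmetic difference is the last step, where you integrate the pointwise ratio inequality of Proposition~\ref{prop:ratio} while the paper integrates the two sandwich bounds $\PhiR(s)\geq 2s\,p(0)$ and $\PhiF(s)\leq 2s\,p(\Delta)$ separately and cancels $2\mathbb{E}[S]$; both reduce to Lemma~\ref{lem:sandwich}, and your explicit handling of why conditioning on dead-zone survival does not reweight the success probability is a welcome clarification of a point the paper leaves implicit.
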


\begin{proof}
Given $S_t$, each step is Bernoulli with success probability $\PhiF(S_t)$ (resp.\
$\PhiR(S_t)$). Averaging over $S_t$ gives an effective success probability
$\mathbb{E}[\PhiF(S)]$ (resp.\ $\mathbb{E}[\PhiR(S)]$), so the waiting time is
geometric with the stated mean. The inequality uses
$\PhiR(s)\geq 2s\,p(0)$ and $\PhiF(s)\leq 2s\,p(\Delta)$ from
Appendix~\ref{app:sens-entropy}, cancelling $2\mathbb{E}[S]$.
\end{proof}

We note that Equation~\eqref{eq:renewal-ratio} is an OU-free, distribution-independent 
baseline, and it establishes that SZT improves the residence time in the dead zone 
by at least $p(0)/p(\Delta)$.

\begin{theorem}[Dead-zone escape time]\label{thm:mfpt}
Let $\tau=\inf\{t:|W_t|\geq\Delta\}$ be the
{\em mean first-passage time} (MFPT) starting from $W_0=0$.
then:
\begin{eqnarray}
     \mathbb{E}[\tauBT] &=& \frac{\sqrt{\pi}}{2\kappa}
          \frac{e^{\lambda^{2}}\operatorname{erf}(\lambda)-\lambda\sqrt{\pi}}
               {\lambda}, \\
     \mathbb{E}[\tauSZT] &=& \frac{1}{\kappa},
\end{eqnarray}
with $\lambda=\kappa\Delta/\sigma$.
Consequently:
\begin{equation}
\label{eq:mfpt-ratio}
     \frac{\mathbb{E}[\tauBT]}{\mathbb{E}[\tauSZT]}
          ~=~
     \frac{\sqrt{\pi}}{2\lambda} e^{\lambda^{2}} \xrightarrow[\lambda\geq1]{}
          \Omega\left(e^{\lambda^{2}}\right).
\end{equation}
\end{theorem}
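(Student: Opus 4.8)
The plan is to handle the two dynamics with the generator (Dynkin) method and then divide. For the BT branch, write $u(x)=\mathbb{E}_x[\tau]$ for the mean exit time of $(-\Delta,\Delta)$ started at $W_0=x$. Since the OU diffusion of Assumption~\ref{ass:ou-proxy} is positive recurrent, $u$ is finite and smooth on $(-\Delta,\Delta)$ and solves the backward equation $\mathcal{A}u=-1$ with $u(\pm\Delta)=0$, where $\mathcal{A}=\tfrac{\sigma^{2}}{2}\,\partial_{xx}-\kappa x\,\partial_{x}$ is the OU generator. Because both the drift and the interval are symmetric, $u$ is even, so it suffices to solve on $[0,\Delta]$ with the Neumann condition $u'(0)=0$ and $u(\Delta)=0$; then $\mathbb{E}[\tauBT]=u(0)$.

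Next I would solve $\tfrac{\sigma^{2}}{2}u''-\kappa x\,u'=-1$ by reduction of order. Setting $v=u'$, the equation $v'-\tfrac{2\kappa x}{\sigma^{2}}v=-\tfrac{2}{\sigma^{2}}$ is first-order linear with integrating factor $e^{-\kappa x^{2}/\sigma^{2}}$; imposing $v(0)=0$ gives $v(x)=-\tfrac{2}{\sigma^{2}}e^{\kappa x^{2}/\sigma^{2}}\int_{0}^{x}e^{-\kappa y^{2}/\sigma^{2}}\,dy$, and integrating once more with $u(\Delta)=0$ yields
\begin{equation}
\mathbb{E}[\tauBT]~=~\frac{2}{\sigma^{2}}\int_{0}^{\Delta}e^{\kappa x^{2}/\sigma^{2}}\!\left(\int_{0}^{x}e^{-\kappa y^{2}/\sigma^{2}}\,dy\right)dx .
\end{equation}
The inner integral is a scaled $\operatorname{erf}$; rescaling $x$ by the interval half-width collapses everything onto the dimensionless group $\lambda$, and evaluating the remaining one-dimensional Gaussian integral produces the stated closed form built from $e^{\lambda^{2}}\operatorname{erf}(\lambda)$ and $\lambda\sqrt{\pi}$. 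As consistency checks I would verify positivity of the expression and its limiting behavior, e.g.\ the pure-Brownian limit $\mathbb{E}[\tauBT]\to\Delta^{2}/\sigma^{2}$ as $\kappa\to 0$.

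For the SZT branch the key observation is that the deterministic sign-corrected STE of Eq.~\eqref{eq:sztSte} removes the confining part of the drift inside the dead zone: multiplying the upstream gradient by $\sgn(q)$ turns the mean-reverting contribution into a net outward push, so under Assumption~\ref{ass:ou-proxy} the OU proxy is replaced by a dynamics whose only surviving timescale is the relaxation rate $\kappa$ (consistently, the momentum-retention bound of Lemma~\ref{lem:momentum} shows the optimizer state does not decay to zero inside $[-\Delta,\Delta]$ under SZT, supplying a persistent outward velocity of order $\kappa\Delta$). Solving the resulting drift-dominated first-passage problem from $W_{0}=0$ — now free of the Gaussian-integral structure — gives $\mathbb{E}[\tauSZT]=1/\kappa$. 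This is the step I expect to be the main obstacle: it is where the modeling assumption does the real work, and care is needed to pin down exactly which modified SDE/ODE SZT induces so that the constant comes out as $1/\kappa$ rather than merely $\Theta(1/\kappa)$.

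Finally, dividing gives $\mathbb{E}[\tauBT]/\mathbb{E}[\tauSZT]=\kappa\,\mathbb{E}[\tauBT]=\tfrac{\sqrt{\pi}}{2\lambda}\big(e^{\lambda^{2}}\operatorname{erf}(\lambda)-\lambda\sqrt{\pi}\big)$. For $\lambda\ge 1$ I would use monotonicity of $\operatorname{erf}$ to get $\operatorname{erf}(\lambda)\ge\operatorname{erf}(1)$, together with the fact that $\lambda e^{-\lambda^{2}}$ is decreasing on $[1,\infty)$ (its derivative $(1-2\lambda^{2})e^{-\lambda^{2}}$ is negative there) to get $\lambda\sqrt{\pi}\le\sqrt{\pi}\,e^{-1}e^{\lambda^{2}}$; since $\sqrt{\pi}/e<\operatorname{erf}(1)$ this leaves $e^{\lambda^{2}}\operatorname{erf}(\lambda)-\lambda\sqrt{\pi}\ge c\,e^{\lambda^{2}}$ with an explicit $c>0$, so the ratio is $\ge\tfrac{c\sqrt{\pi}}{2}\,e^{\lambda^{2}}/\lambda$, the super-exponential separation the statement records as $\Omega(e^{\lambda^{2}})$. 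Letting $\lambda\to\infty$ and using $\operatorname{erf}(\lambda)\to 1$ (so the $\lambda\sqrt{\pi}$ term is lower order) recovers the sharp prefactor $\tfrac{\sqrt{\pi}}{2\lambda}e^{\lambda^{2}}$.
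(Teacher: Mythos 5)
Your plan follows the same route as the paper's Appendix~C: pose the backward (Dynkin) boundary-value problem $\tfrac{\sigma^{2}}{2}u''-\kappa x\,u'=-1$, $u(\pm\Delta)=0$ for the BT branch, argue that the sign feedback modifies the drift for the SZT branch, and divide. On the BT side your reduction of order is correct and yields the standard representation $\mathbb{E}[\tauBT]=\tfrac{2}{\sigma^{2}}\int_{0}^{\Delta}e^{\kappa x^{2}/\sigma^{2}}\bigl(\int_{0}^{x}e^{-\kappa y^{2}/\sigma^{2}}dy\bigr)dx$, but the final step --- ``evaluating the remaining one-dimensional Gaussian integral produces the stated closed form'' --- does not go through. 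After rescaling, that expression is $\tfrac{\sqrt{\pi}}{\kappa}\int_{0}^{\mu}e^{u^{2}}\operatorname{erf}(u)\,du$ with $\mu=\sqrt{\kappa}\,\Delta/\sigma$ (note: not the theorem's $\lambda=\kappa\Delta/\sigma$), and $\int e^{u^{2}}\operatorname{erf}(u)\,du$ has no elementary antiderivative; the identity that produces something of the shape $e^{\mu^{2}}\operatorname{erf}(\mu)-\mathrm{const}\cdot\mu$ is for $\int 2u\,e^{u^{2}}\operatorname{erf}(u)\,du$, which is not the integral you have. Indeed the consistency checks you propose would \emph{reject} the theorem's formula: as $\kappa\to 0$ it behaves like $(1-\pi/2)/\kappa<0$ rather than tending to $\Delta^{2}/\sigma^{2}$. (The paper's own appendix has the same soft spot: the ``even solution'' it writes down is odd in the $\operatorname{erf}(\kappa w/\sigma)$ term, fails the boundary condition at $-\Delta$, and does not reproduce the boxed value at $w=0$.)

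The second genuine gap is the one you flag yourself: the SZT value $\mathbb{E}[\tauSZT]=1/\kappa$ is never derived. You correctly identify that everything hinges on specifying which modified SDE the sign-corrected STE induces, but you stop at ``solving the resulting drift-dominated problem gives $1/\kappa$,'' which is an assertion rather than a proof; note that an outward drift $+\kappa|W|$ started exactly at the unstable fixed point $W_{0}=0$ does not obviously give a mean exit time of exactly $1/\kappa$, since the deterministic escape time from $w_{0}$ diverges like $\kappa^{-1}\ln(\Delta/w_{0})$ as $w_{0}\to 0$. The paper is no more rigorous here --- it writes the ``piecewise'' drift $-\kappa\sgn(W_t)|W_t|$, which is algebraically identical to the original $-\kappa W_t$, and then asserts $1/\kappa$ --- so your proposal faithfully reconstructs the paper's argument, including the places where it does not actually close. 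One point where you improve on the paper: in the ratio step you notice that $\kappa\,\mathbb{E}[\tauBT]$ equals $\tfrac{\sqrt{\pi}}{2\lambda}\bigl(e^{\lambda^{2}}\operatorname{erf}(\lambda)-\lambda\sqrt{\pi}\bigr)$ rather than $\tfrac{\sqrt{\pi}}{2\lambda}e^{\lambda^{2}}$, and you supply an explicit lower bound for $\lambda\ge 1$, whereas the paper silently discards the $\operatorname{erf}(\lambda)$ factor and the $-\lambda\sqrt{\pi}$ term.
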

For the Laplace-optimal choice $\Delta=\sigma$,
the parameter $\lambda$ simplifies to $\kappa$.
Because transformer layers typically satisfy
$\kappa\gtrsim 0.1$, the exponential term in
Eq.~\ref{eq:mfpt-ratio} is dominant.

\begin{corollary}[Iteration complexity improvement]\label{cor:sgd-rate}
Under our assumptions (e.g., Assumption~\ref{ass:lipschitz}),  
SGD with a constant step size
achieves an excess risk $\mathbb{E}[\mathcal{L}(w_T)-\mathcal{L}(w^\star)]
      \leq \varepsilon$
after
\begin{equation}
     T_{\textsc{szt}}
      ~=~
      O\left(\frac{\|w_0-w^\star\|^{2}}{\alpha\varepsilon}
          + \frac{\sigma^{2}}{\kappa^{2}\varepsilon}\right)
\end{equation}
iterations,  
whereas balanced ternary needs
$T_{\textsc{bt}}
      ~=~ T_{\textsc{szt}}
        \Theta \left(e^{\lambda^{2}}\right)$.
\end{corollary}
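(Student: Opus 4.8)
The plan is to treat the optimization trajectory as alternating between two phases: (i) ``free'' phases where the latent weight is outside the dead zone $[-\Delta,\Delta]$, during which the STE is exact and standard SGD analysis applies, and (ii) ``trapped'' phases inside the dead zone. For SZT the key point is that the deterministic signed-zero STE keeps supplying a gradient of magnitude comparable to $\|g\|$ with only a distance-to-zero bias (Theorem~\ref{thm:bias-var}), and the momentum term does not collapse (Lemma~\ref{lem:momentum}); moreover the expected time spent trapped before escape is $O(1/\kappa)$ by Theorem~\ref{thm:mfpt}. So for SZT the trapped phases contribute only an additive $O(\sigma^2/(\kappa^2\varepsilon))$ term, matching the second term in the claimed $T_{\textsc{szt}}$, while the first term $\|w_0-w^\star\|^2/(\alpha\varepsilon)$ is the usual constant-step-size SGD bound for an $L$-Lipschitz (hence, after the standard smoothing/rescaling argument) convex-surrogate objective. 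I would assemble these using a descent-lemma telescoping argument over epochs, as in the references already cited for the bias/variance bounds.

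First I would write the per-step descent inequality $\mathbb{E}[\mathcal{L}(w_{t+1})-\mathcal{L}(w_t)]\le -\alpha\,\mathbb{E}\langle g_t,\widehat g_t\rangle + \tfrac{\alpha^2}{2}\mathbb{E}\|\widehat g_t\|^2$ and split the expectation according to whether $|w_t|\le\Delta$. Outside the dead zone $\widehat g_t=g_t$, giving the standard $-\alpha\|g_t\|^2+O(\alpha^2\sigma^2)$ contribution. Inside the dead zone, for SZT I would invoke Theorem~\ref{thm:bias-var} to lower-bound $\langle g_t,\widehat g_t\rangle$ (the sign-corrected estimator is positively correlated with $g_t$ up to the $|w|/\Delta$ bias term, which is controllable since Proposition~\ref{prop:avg-mse} shows the averaged squared bias is a constant $<1$), and Theorem~\ref{thm:mfpt} to bound the number of such steps per escape cycle by $O(1/\kappa)$. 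Summing the per-cycle cost over the at most $O(\sigma^2/(\kappa^2\varepsilon))$ cycles needed, and telescoping $\mathcal{L}(w_0)-\mathcal{L}(w^\star)$ over the free steps with step size $\alpha$, yields the stated $T_{\textsc{szt}}$ after optimizing $\alpha$. Second, for balanced ternary I would note that Lemma~\ref{lem:momentum} forces $\mathbb{E}\|m_t\|^2\to 0$ and $\widehat g_t\equiv 0$ in the dead zone, so \emph{no} progress is made while trapped; the expected trapped time is now $\mathbb{E}[\tauBT]=\Theta(e^{\lambda^2}/\kappa)$ by Theorem~\ref{thm:mfpt}, so each escape cycle costs a $\Theta(e^{\lambda^2})$ multiplicative factor in wall-clock iterations, giving $T_{\textsc{bt}}=T_{\textsc{szt}}\cdot\Theta(e^{\lambda^2})$.

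The main obstacle I anticipate is making the phase decomposition rigorous: the OU proxy (Assumption~\ref{ass:ou-proxy}) and the renewal approximation (Assumption~\ref{ass:renewal}) describe the \emph{latent} weight, not the loss value, so I need to argue that while $w_t$ is trapped the loss does not increase on average for BT (it is essentially frozen in that coordinate, since $\widehat g_t=0$ and the decoded weight is pinned at $0$) and increases by at most a controlled amount for SZT. A second, more technical difficulty is that the dead-zone exits and entries are not independent across the optimization, so the ``number of cycles'' counting must be done via a stopping-time/Wald argument rather than naive multiplication; I would handle this by working with the expected total trapped time directly (linearity of expectation over all trapped steps, regardless of how they are grouped into cycles) rather than cycle-by-cycle. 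I would defer the full bookkeeping to an appendix, since the corollary is presented as a consequence of the already-proven Theorems~\ref{thm:bias-var} and~\ref{thm:mfpt} plus Lemma~\ref{lem:momentum}, and flag the convexity-of-surrogate assumption explicitly, as the $\|w_0-w^\star\|^2/(\alpha\varepsilon)$ form is the standard constant-step-size SGD rate under those hypotheses.
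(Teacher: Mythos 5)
Your proposal follows exactly the strategy the paper states for this corollary --- combine the bias-controlled descent recursion from Theorem~\ref{thm:bias-var} with the MFPT separation from Theorem~\ref{thm:mfpt} to bound the time spent in the dead zone --- so the approach matches. Note that the paper itself supplies only a one-line sketch and the ``details'' promised in Appendix~\ref{app:mfpt} never materialize there (that appendix derives only the MFPT formulas), so the gaps you honestly flag --- the convexity-type assumption needed to justify the $\|w_0-w^\star\|^{2}/(\alpha\varepsilon)$ term beyond the stated Lipschitz hypothesis, and the stopping-time/Wald bookkeeping for counting trapped steps --- are genuinely unaddressed in the paper as well, and your elaboration is more complete than the published argument.
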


\begin{proof}[Sketch (details: Appendix~\ref{app:mfpt})]
Combine the bias-controlled recursion using the SZT-STE bias bound
(Theorem~\ref{thm:bias-var}) with the MFPT difference
from Theorem~\ref{thm:mfpt} to bound the time spent in the dead zone.
\end{proof}

Within the standard OU proxy model, SZT turns the dead-zone 
waiting time from {\em exponential} in
$\lambda^{2}$ to {\em linear} in $1/\kappa$.  In practical terms,
weights that could wander for thousands of SGD steps under balanced
ternary receive deterministic sign feedback every few dozen steps under
SZT, accelerating convergence without stochastic noise.  
Together with the PAC-Bayes gap (Sec.~\ref{sec:adv-pacbayes}) and the STE
MSE hierarchy (Sec.~\ref{sec:adv-mse}), Theorem~\ref{thm:mfpt} completes the
triad of independent arguments showing the value of the extra SZT bit.

\section{Discussion}
\label{sec:discussion}

\subsection{Parameter Density vs.\ Weight Precision}
\label{sec:tradeoff}

Viewing quantization simply as a tradeoff between performance and compute 
requirements implicitly assumes a large, high-precision model as the starting point. 
However, an alternative perspective is to consider the design of a model under a 
fixed memory or ``bit budget,'' such as the available VRAM on a commodity 
accelerator. From this standpoint, the designer faces a fundamental choice
between:
\begin{itemize}
    \item A smaller model with higher-precision weights (e.g., 4-bit or 8-bit), or
    \item A larger model with more parameters using lower-precision weights (e.g., 2-bit).
\end{itemize}
Historically, the second option has been less practical due to the training 
instabilities inherent in aggressive 2-bit schemes, such as the ``dead-zone'' 
issue where the optimizer receives no informative feedback. The Signed-Zero 
Ternary (SZT) scheme, however, makes the choice of a larger, more 
parameter-dense model compellingly viable. By providing a deterministic 
gradient signal inside the dead zone, SZT directly addresses this critical flaw.

The analytical results in this paper robustly support this reframing. 
The Mean First-Passage Time (MFPT) analysis suggests that weights escape 
the dead zone exponentially faster, mitigating the risk of ``stuck weights'' 
during training. The improved Straight-Through Estimator (STE) has a 
lower Mean-Squared Error (MSE) than both balanced ternary and 
stochastic rounding near the origin, providing a more accurate gradient 
signal to guide the optimization of a larger parameter space. 
Furthermore, the information-theoretic advantage of $P_0$ bits per 
parameter leads to a tighter PAC-Bayes generalization bound, suggesting 
that a larger model trained with SZT has better prospects for generalizing 
to unseen data.

\subsection{Inference-neutral SNR across \texorpdfstring{$L$}{L} layers}
\label{sec:discussion-snr}

Because Eq.~\ref{eq:mse-equality} established that {\em each} layer’s
mean-squared reconstruction error (MSE) is identical under Balanced
Ternary (BT) and Signed-Zero Ternary (SZT), stacking $L$ such layers in a
feed-forward network does not change the forward signal-to-noise ratio
(SNR) seen at the output.  Formally, let
$\varepsilon_{l}=x_{l}-\widetilde{x}_{l}$ be the per-layer reconstruction
error when decoding to $\{-1,0,+1\}$, and assume $\mathbb{E}[\varepsilon_{l}]=0$
and $\mathrm{Var}[\varepsilon_{l}] = \sigma_{\varepsilon}^{2}$ under
either quantizer.  Propagating the errors through a linear stack yields
\begin{equation}
\mathrm{Var}(x_{L}-\widetilde{x}_{L})
      ~=~
      \sum_{l=1}^{L} 
      \left\lVert
         \frac{\partial x_{L}}{\partial x_{l}}
      \right\rVert^{2} 
      \sigma_{\varepsilon}^{2},
\end{equation}
so the network-level MSE (and thus the forward SNR) is {\em identical}
for BT and SZT provided the Jacobian norms match, which they do because
both schemes feed the same ternary alphabet into every matrix–multiply.
In other words, any difference in accuracy at inference time must originate 
from {\em training-time} effects (reduced STE bias, faster convergence), rather
than from a cumulative forward-path distortion.  This inference-neutrality
makes SZT a drop-in replacement for BT whenever deployment constraints
mandate exact parity in arithmetic kernels.

\subsection{Activation-side extension sketch}
\label{sec:discussion-activations}

The arguments developed for weights carry over with the caveats 
discussed below to {\em activations}
once we recognize that the post-ReLU distribution is
one-sided.  Let $a\sim\mathrm{half}\text{-}\mathrm{Laplace}(b)$ or
$\mathrm{half}\text{-}\mathcal{N}(0,\sigma^{2})$ on $[0,\infty)$.
Define a {\em Signed-Zero Ternary activation quantizer}
$Q^{\text{(act)}}_{\textsc{szt}}$ by reflecting the weight rule onto the
positive half-line and re-using the idle bit pattern to encode
{\em negative zero} for small negative pre-ReLU values that would
otherwise be clipped away.
 
For symmetric priors one may set $\Delta_{+} = \Delta_{-}$.
Under a half-Laplace prior the MSE-optimal choice is
$\Delta_{+}=\Delta_{-}=\frac{1}{2}\sigma$
(App.\ F), exactly one half of the weight threshold, which reflects the fact
that only the positive tail survives the ReLU.

\paragraph{Analytical carry-over.}  
\begin{itemize}
     \item Sensitivity: the representational sensitivity $\Phi_R$ now integrates
               $p(u)$ over $[-\Delta_{-},\Delta_{+}]$, retaining the
               $p(0)/p(\Delta)$ lower bound.
     \item Entropy gap: splitting the zero bin again yields
               $H_{\textsc{szt}}-H_{\textsc{bt}}=P_{0}$ bits per activation.
     \item  STE bias: identical arguments give the linear $|u|/\Delta$ bias
                inside each half-interval.
\end{itemize}

\paragraph{Practical implication.}
Activations already require per-layer statistics during calibration, so
computing $\sigma$ for the half-line costs no extra pass.  Hardware
impact parallels the weight case: encode/decode only, no change to GEMM.  
Appendices~\ref{app:szt-activ} and~\ref{app:snr} supplies derivations.

\section{Conclusion}
\label{sec:conclusion}

Signed-Zero Ternary (SZT) is a {\em minimal} extension of balanced
ternary that preserves {\em the same forward-path error} while
delivering a {\em strictly better backward signal}.  The single idle
code-word is re-purposed to encode the sign of sub-threshold weights
to give:
\begin{itemize}
\item Up to a $30\times$ increase in gradient-bearing feedback events
      per epoch (Cor.~\ref{cor:events})
\item An MSE hierarchy
      $\operatorname{MSE}_{\textsc{szt}}
         < \operatorname{MSE}_{\textsc{sr}}
         < \operatorname{MSE}_{\textsc{bt}}$
      inside the dead zone (Cor.~\ref{cor:mse-order})
\item Substantially shorter mean first-passage time out of
      the dead zone (Theorem~\ref{thm:mfpt})
\item Bit-exact training trajectories (Lemma~\ref{lem:repro}) 
       with no change in storage cost.
\end{itemize}
In summary, our analysis suggests that SZT mitigates the key weaknesses of 2-bit 
quantization (training instability, poor gradient signal) while offering 
potential information-theoretic advantages over non-quantized models.

\newpage\appendices  
\section{Closed-form optimum for a Laplace prior}
\label{app:delta-laplace}

Assume $w\sim\mathrm{Laplace}(0,b)$.
Splitting the MSE integral at $\pm\Delta$ and substituting
$p(w)=\frac{1}{2b}e^{-|w|/b}$ gives
\begin{equation}
     \MSE(\Delta) ~=~
             e^{-\Delta/b} \left(2b^{2}e^{\Delta/b}-2b\Delta-\Delta^{2}\right).
\end{equation}
Differentiating and setting $\frac{d}{d\Delta}\MSE=0$
yields $\Delta^\star=\sqrt{2}\,b$.  Because
$\sigma^{2}=2b^{2}$, this is equivalent to $\Delta^\star=\sigma$,
thus proving Equation~\ref{eq:delta-equals-sigma} without approximation.

\paragraph{Forward reconstruction error: SZT versus balanced ternary.}
Let $\QBT$ be the classical balanced-ternary encoder–decoder and
$\QSZT$ the proposed {\em signed-zero ternary} variant.  
Because both decode into the {\em same} alphabet $\{-\Delta,\,0,\,+\Delta\}$,
\begin{equation}
     \QBT(w) ~=~ \QSZT(w)~~\forall\,w\in\mathbb R.
\end{equation}
so their mean-squared reconstruction errors coincide:
\begin{eqnarray}
\label{eq:mse-equality}
     \operatorname{MSE}_{\mathrm{BT}}(\Delta)
           &=& \mathbb{E}\left[(w-\QBT(w))^{2}\right] \\
           &=& \mathbb{E}\left[(w-\QSZT(w))^{2}\right] \\
           &=& \MSE_{\mathrm{SZT}}(\Delta) \\
           &=& \int_{-\infty}^{-\Delta}(w+\Delta)^{2}p(w)~dw
                    + \int_{-\Delta}^{\Delta}w^{2}p(w)~dw \\
            &~& ~~~     + \int_{\Delta}^{\infty}(w-\Delta)^{2}p(w)~dw.
\end{eqnarray}
Because the forward-path distortion is identical, any empirical accuracy
difference between BT and SZT must stem from {\em backward-path}
effects (i.e., deterministic sub-threshold updates, reduced STE bias), not from
changed approximation error.

\section{Sensitivity and Entropy Proofs}
\label{app:sens-entropy}

Throughout this appendix we invoke only global
Assumptions \ref{ass:symmetric} and \ref{ass:fixed-delta}, 
i.e., no new technical conditions are introduced.

\subsection{Proof of Proposition~\ref{prop:ratio}}
\label{app:proof-ratio}

\begin{lemma}\label{lem:sandwich}
     Let $p(w)$ be any symmetric, unimodal density with mode $0$,
     then $p(0)\geq p(x)\geq p(\Delta)$ for all $\,x\in[0,\Delta]$.
\end{lemma}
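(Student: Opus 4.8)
The plan is to simply unwind the definition of unimodality supplied by Assumption~\ref{ass:symmetric}. A symmetric density that is unimodal with mode at the origin is, by definition, non-decreasing on $(-\infty,0]$ and non-increasing on $[0,\infty)$; equivalently, $p(x)$ depends on $x$ only through $|x|$ and is non-increasing in $|x|$. I would state this monotonicity explicitly as the first line of the proof, since it is the sole ingredient.

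Next I would fix an arbitrary $x\in[0,\Delta]$ and observe that $0\le x\le\Delta$ are three points of the half-line $[0,\infty)$ on which $p$ is non-increasing. Applying that monotonicity to the pair $0\le x$ gives $p(0)\ge p(x)$, and applying it to the pair $x\le\Delta$ gives $p(x)\ge p(\Delta)$. Concatenating the two inequalities yields $p(0)\ge p(x)\ge p(\Delta)$, which is exactly the claim. Symmetry of $p$ is not strictly required for this one-sided statement, but it is inherited from the ambient assumption; the hypothesis $p(0)>0$ (also from Assumption~\ref{ass:symmetric}) is likewise not needed for the inequalities themselves, though it ensures the bound is non-vacuous and that the ratio $p(0)/p(\Delta)$ appearing in Proposition~\ref{prop:ratio} is well posed whenever $p(\Delta)>0$.

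There is essentially no obstacle here: the only point requiring care is the precise reading of ``unimodal with mode $0$.'' If one instead adopts a broader definition permitting a plateau of modes, the argument is unaffected provided that plateau contains $0$, since $p$ is still non-increasing on $[0,\infty)$ in that case. I would therefore phrase the opening monotonicity statement so it can be reused verbatim by the subsequent sandwich estimates $\PhiR(s)\ge 2s\,p(0)$ and $\PhiF(s)\le 2s\,p(\Delta)$ that the appendix needs for Proposition~\ref{prop:ratio} and for Lemma~\ref{lem:renewal}.
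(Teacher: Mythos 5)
Your proof is correct and follows essentially the same route as the paper: both arguments reduce the claim to the monotonicity of $p$ on $[0,\infty)$ and read off the two inequalities at the endpoints $0\le x\le\Delta$. If anything, your version is slightly more careful, since you correctly attribute the non-increasing behavior to unimodality (the paper's one-line proof says ``symmetry implies'' when it is really the mode-at-zero/unimodality hypothesis doing the work), and your remark that symmetry and $p(0)>0$ are not needed for the sandwich itself is accurate.
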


\begin{proof}
Symmetry implies $p(|x|)$ is decreasing on $[0,\infty)$, so the
maximum occurs at $x=0$ with minimum on $[0,\Delta]$ at $x=\Delta$.
\end{proof}

\begin{proof}[Proof of Proposition~\ref{prop:ratio}]
For a sub-threshold step size $s\in(0,\Delta)$ we have
\begin{eqnarray}
     \PhiR(s) &=& 2\int_{0}^{s} p(w)~dw \geq 2s ~p(0), \\
     \PhiF(s) &=& 2\int_{\Delta-s}^{\Delta}p(w)~dw \leq 2s~p(\Delta),
\end{eqnarray}
where the lower/upper bounds follow by
Lemma~\ref{lem:sandwich}.  Dividing the two expressions gives:
\begin{equation}
     \frac{\PhiR(s)}{\PhiF(s)} ~\geq~ \frac{p(0)}{p(\Delta)} ,
\end{equation}
which is strictly $>1$ because $p(0)>p(\Delta)$ by unimodality.
\end{proof}

\subsection{Proof of Proposition~\ref{prop:entropy-gap}}
\label{app:proof-entropy-gap}

\begin{proof}
Recall the state probabilities
$P_{0}=\Pr(|w|\leq\Delta)$ and
$P_{+}=P_{-}=\frac{1}{2}(1-P_{0})$.  Balanced ternary assigns one
representation to the zero state, giving entropy
\begin{equation}
     H_{\textsc{BT}} ~=~ -P_{0}\log_2 P_{0} - 2P_{+}\log_2 P_{+}.
\end{equation}
SZT splits the zero state into two equiprobable
representations:
\begin{equation}
     H_{\textsc{SZT}}
           ~=~ -2\left(\frac{P_{0}}{2}\right)
                 \log_2\left(\frac{P_{0}}{2}\right)
                 -2P_{+}\log_2 P_{+}
\end{equation}
which simplifies as:
\begin{eqnarray}
     H_{\textsc{SZT}}-H_{\textsc{BT}}
          &=& P_{0}\left[1+\log_2 P_{0}-\log_2 P_{0}\right] \\
          &=& P_{0} ~~ \text{bits},
\end{eqnarray}
where no distributional assumptions have been made beyond symmetry.
\end{proof}

\section{Mean First-Passage Time for an Ornstein--\\Uhlenbeck Weight}
\label{app:mfpt}

We model the scalar latent weight by the linear SDE
\begin{equation}
\label{eq:ou-sde}
     dW_t = -\kappa W_t\,dt + \sigma\,dB_t, ~~~ W_0 = 0,
\end{equation}
where $\kappa>0$ is the mean-reversion rate and $\sigma>0$ is the noise
scale induced by mini-batch gradients.  In the following we denote the 
dead-zone interval to be $\mathcal{I}=[-\Delta,\Delta]$.

\subsection{Classical OU MFPT}
Define the first-passage time
$\tau = \inf\{t>0: |W_t|\geq\Delta\}$.
For an OU process starting at $W_0=0$, the MFPT
${\mathbb E}[\tau]$ solves the boundary-value problem
(see \cite{gardiner2009stochastic}, Ch.\ 5)
\begin{equation}
\label{eq:bvp}
     -\kappa w\,\tau'(w) + \frac{1}{2}\sigma^{2}\tau''(w) = -1,
          ~~~ \tau(\pm\Delta)=0.
\end{equation}
Letting $\lambda=\kappa\Delta/\sigma$.
the even solution centred at $w = 0$ is:
\begin{equation}
\label{eq:ou-solution}
     \tau(w) ~=~
          \frac{\sqrt{\pi}}{\kappa} e^{\lambda^{2}}
          \frac{\operatorname{erf}\left(\lambda\right)-
               \operatorname{erf}\left(\frac{\kappa w}{\sigma}\right)}
                      {2\lambda}.
\end{equation}
Evaluating at $w=0$ reproduces Eq.~\ref{eq:ou-solution} of the main text:
\begin{equation}
     \boxed{~\mathbb{E}[\tauBT]
         ~=~ \frac{\sqrt{\pi}}{2\kappa}
          \frac{e^{\lambda^{2}}\operatorname{erf}(\lambda)-\lambda\sqrt{\pi}}
                {\lambda}
     ~}.
\end{equation}

\subsection{Signed-Zero Ternary adjustment}

Under SZT the optimizer receives a sign gradient whenever $W_t$ crosses zero,
so the effective drift changes sign with $W_t$.  Linearizing the stochastic recursion 
gives the piecewise SDE:
\begin{equation}
     dW_t ~=~ -\kappa\,\operatorname{sgn}(W_t)~|W_t|~dt+\sigma dB_t,
\end{equation}
with mean reversion strength $\kappa$ in each half-interval. What
should be noted is that it has no absorbing region around the origin.  
Because we have an alternating-drift OU, the MFPT from $0$ to $\pm\Delta$ 
can be obtained by solving Eq.~\ref{eq:bvp} with the piecewise linear drift
coefficient and noting that the term proportionate to $w$ vanishes at
the origin (full algebra in \cite{gardiner2009stochastic}, Section  5.2):
\begin{equation}
     \boxed{~\mathbb{E}[\tauSZT] ~=~ \frac{1}{\kappa}
     ~},
\end{equation}

\subsection{Exponential ratio}

Taking the quotient of the two MFPT expressions yields
\begin{equation}
\frac{\mathbb{E}[\tauBT]}{\mathbb{E}[\tauSZT]}
      = \frac{\sqrt{\pi}}{2\lambda}\,e^{\lambda^{2}}
        ~ \xrightarrow[\lambda\ge1]{} ~
        \Omega \left(e^{\lambda^{2}}\right),
\end{equation}
proving Theorem~\ref{thm:mfpt}.  A Laplace-optimal threshold
$\Delta=\sigma$ gives $\lambda=\kappa$, so transformer layers with
$\kappa\gtrsim0.1$ obtain orders-of-magnitude speed-ups in expected
escape time.

\section{SZT threshold selection analysis}
\label{app:threshold}

The only free parameter of Signed-Zero Ternary quantization is the
symmetric threshold $\Delta$, whose value can be chose by minimizing the
mean-squared reconstruction error (MSE):
\begin{equation}
     \MSE(\Delta) ~=~ \mathbb{E} \left[(w-\QSZT(w))^{2}\right],
\end{equation}
where $w$ is the latent full-precision weight. Because the forward
alphabet of SZT and balanced ternary coincide, they have the same
closed-form optima:

\paragraph{Laplace prior (heavy tails).}
Let $w\sim\mathrm{Laplace}(0,b)$ with density
$p(w)=\frac{1}{2b}e^{-|w|/b}$.
Splitting the MSE integral at $\pm\Delta$ and differentiating gives the
unique optimum
\begin{equation}
\boxed{~\Delta^\star_{\text{\tiny Laplace}}
      ~=~ \sqrt{2}\,b =~ \sigma,~}
\end{equation}
where $\sigma=\sqrt{2}\,b$ is the standard deviation, so by the
{\em one-line rule} (see Appendix~\ref{app:delta-laplace}): 
\begin{equation}
     \Delta ~=~\sigma
\end{equation}
minimizes the forward error whenever layer weights resemble a Laplace
law, which empirical transformer histograms do to first order.

\paragraph{Gaussian prior (light tails).}
For $w\sim\mathcal{N}(0,\sigma^{2})$ the optimum ratio
$r^\star=\Delta/\sigma$ satisfies:
\begin{equation}
     e^{-r^{2}/2} ~=~ r(1+r^{2})/\sqrt{2\pi}.
\end{equation}
Numerical solution gives $r^\star\approx0.88$, so
Equation~\ref{eq:delta-equals-sigma}) overshoots the true optimum by only
$12\%$ and increases the MSE by less than $5\%$.

\begin{table}[h]
\centering
\caption{Optimal symmetric threshold for two canonical priors.}
\label{tab:opt-threshold}
\begin{tabular}{lcc}
\toprule
Prior & $\Delta^\star$ & $k^\star=\Delta^\star/\sigma$ \\
\midrule
Laplace & $\sigma$ & 1.00 \\
Gaussian & $0.88\,\sigma$ & 0.88 \\
\bottomrule
\end{tabular}
\end{table}

\paragraph{Distribution-independent bound.}
For any symmetric, unimodal density $p(w)$ one has
\begin{equation}
     \MSE(\Delta) ~\leq~ \frac{\sigma^{2}}{1+k^{2}}
\end{equation}
with $k=\Delta/\sigma$ (proof in Appendix~\ref{app:delta-laplace}).  The
loose but conventional choice $k=1$ therefore guarantees
$\MSE\leq\frac12\sigma^{2}$.

\paragraph{Practical default.}
Setting $\Delta=\sigma$ (i.e., one standard deviation per layer) eliminates
the only tunable parameter and is near-optimal under both Laplace and Gaussian
assumptions (and requires only a single statistics-pass during
calibration).  All analytical results that follow are stated in terms of
the dimensionless ratio $k=\Delta/\sigma$ so that alternative
threshold heuristics can be substituted directly.

\section{Extension of SZT to Activations}
\label{app:szt-activ}

Signed-Zero Ternary (SZT) can be applied to activation tensors with
minimal change to the weight-side analysis.  We treat the common case in
which a ReLU (or variant) precedes quantization so that the distribution
is concentrated on the half-line $[0,\infty)$.

\subsection{Quantizer definition}

Let $u$ be a pre-quantized scalar activation.  
Choose two thresholds $\Delta_{+},\Delta_{-}>0$ and define:
\begin{equation}
     \QSZT^{\text{(act)}}(u) ~=~
          \begin{cases}
              +1,    & u > \Delta_{+},\\
              ~\zpos, & 0<u \leq\ Delta_{+},\\
              ~\zneg, & -\Delta_{-} \leq u<0,\\
              -1,    & u < -\Delta_{-}.
\end{cases}
\label{eq:szt-act}
\end{equation}
The decode function $v(q)$ remains as in Eq.~\ref{eq:decodeFunction}
and maps both signed zeroes to numeric 0.

\subsection{Threshold selection}

\paragraph{Half-Laplace prior.}
Let $u\sim\mathrm{Laplace}^{+}(0,b)$ with density:
\begin{eqnarray}
     p(u) &=& \frac{1}{b}e^{-u/b} ~\text{for $u\geq 0$, and} \\
     p(u) &=& 0 ~\text{for $u<0$}.
\end{eqnarray}
Minimizing the forward MSE
$\mathbb{E}\left[(u-v(Q(u)))^{2}\right]$ over
$\Delta_{+}=\Delta_{-}=\Delta$ gives:
\begin{equation}
     \boxed{~
           \Delta^{\star}_{\text{act}} ~=~ \frac{1}{2}\sigma ,
     ~},
\end{equation}
where $\sigma=\sqrt{2}\,b$ is the standard deviation. The factor $1/2$
reflects the one-sided support compared with the weight case 
considered in Section~\ref{sec:threshold}.

\paragraph{Half-Gaussian prior.}
For $u\sim\mathcal{N}^{+}(0,\sigma^{2})$ the optimum
$r^\star=\Delta/\sigma$ satisfies:
\begin{equation}
     e^{-{r^{2}}/{2}} ~=~ r^{2}/\sqrt{2\pi},
\end{equation} 
which numerically gives $r^\star\approx 0.60$, assuming the 
Laplace rule $\Delta=\sigma/2$ incurs $<4\%$ extra MSE.

\subsection{Bias and sensitivity}

\begin{lemma}[Bias of activation STE]\label{lem:act-bias}
     Inside $0<u\leq\Delta_{+}$ the squared bias of the deterministic SZT
     straight-through estimator obeys
\begin{equation}
     \left\lVert\mathbb{E}[\widehat g]-g\right\rVert^{2}
              ~\leq~ (u/\Delta_{+})^{2}\lVert g\rVert^{2},
\end{equation}
which parallels Lemma~\ref{lem:bias}.
\end{lemma}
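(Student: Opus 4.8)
The plan is to obtain Lemma~\ref{lem:act-bias} as a one-sided specialization of the weight-side bias bound (the bias half of Theorem~\ref{thm:bias-var}, whose detailed form is Lemma~\ref{lem:bias} in Appendix~\ref{app:biasVar}). The first step is the observation that on the positive sub-threshold interval $(0,\Delta_{+}]$ the activation quantizer of Eq.~\ref{eq:szt-act} is byte-for-byte the $q=\zpos$ branch of the weight quantizer Eq.~\ref{eq:sztQuantizer}: the encoder maps $u$ to the state $\zpos$, the decoder $v$ of Eq.~\ref{eq:decodeFunction} sends $\zpos$ to numeric $0$, and the deterministic STE of Eq.~\ref{eq:sztSte}, applied on the half-line, returns $\sgn(\zpos)\nabla_{q}\mathcal{L}=+\nabla_{q}\mathcal{L}$. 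Hence the estimator $\widehat g$ evaluated at $u\in(0,\Delta_{+}]$ coincides with the SZT-STE evaluated at a latent weight $w=u$ with threshold $\Delta=\Delta_{+}$, and the bias computation is inherited verbatim.

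Second, I would reproduce the (short) weight-side argument with $w$ replaced by $u$ and $\Delta$ by $\Delta_{+}$. Determinism gives $\mathbb{E}[\widehat g]=\widehat g$, so the squared bias equals the full mean-squared error and there is no variance contribution (the ``$0$'' column of Table~\ref{tab:bias-var}). To bound $\lVert\widehat g-g\rVert$, expand the loss to first order about the representative value $0$ that $u$ decodes to; by the $L$-Lipschitz hypothesis with $L\leq1$ (Assumption~\ref{ass:lipschitz}) the first-order remainder is controlled by the normalized distance from $u$ to that representative, namely $u/\Delta_{+}$, which yields $\lVert\widehat g-g\rVert\leq(u/\Delta_{+})\lVert g\rVert$. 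Squaring gives the stated bound $\lVert\mathbb{E}[\widehat g]-g\rVert^{2}\leq(u/\Delta_{+})^{2}\lVert g\rVert^{2}$.

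Third, I would check that nothing in the weight-side derivation actually used the two-sidedness or symmetry of the dead-zone interval $[-\Delta,\Delta]$: the bound is pointwise in the latent value and involves only the single representative at $0$ and the single active threshold ($\Delta_{+}$ here), so the one-sided, possibly asymmetric activation setup ($\Delta_{+}\neq\Delta_{-}$) changes nothing inside the branch $u>0$. The $\zneg$ half of Eq.~\ref{eq:szt-act} plays no role in the statement and is covered by the mirror-image argument on $[-\Delta_{-},0)$ if needed.

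The step I expect to need the most care is not the one-sided bookkeeping but the Taylor-remainder estimate itself: I would want to state precisely which ``ideal'' gradient $g$ is being compared against (the latent-variable gradient of the continuous relaxation that ramps from $0$ to $\pm1$ across the dead zone) and verify that under the $L\leq1$ normalization the remainder scales exactly linearly in $u/\Delta_{+}$, with no distribution-dependent constant. Once that is pinned down --- exactly as in the proof of Lemma~\ref{lem:bias} --- the activation lemma follows with no further work.
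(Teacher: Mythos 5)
Your proposal is correct and follows essentially the same route as the paper, which offers no separate argument for Lemma~\ref{lem:act-bias} beyond the remark that it ``parallels Lemma~\ref{lem:bias}'': you transfer the weight-side Taylor/Lipschitz bias bound verbatim under the substitutions $w\to u$, $\Delta\to\Delta_{+}$, observing that only the single active threshold and the representative at $0$ enter the estimate. Your closing caveat about the normalization step that converts $L\lvert u\rvert$ into $\lvert u\rvert/\Delta_{+}$ is well placed, since that is exactly the loosest step in the paper's own proof of Lemma~\ref{lem:bias} as well.
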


\begin{lemma}[Representational sensitivity]
\label{lem:act-sens}
     Let $s$ be a sub-threshold SGD step.  Then
     \begin{equation}
          \Phi_R^{\text{(act)}}(s) ~=~ \int_{0}^{\min(s,\Delta_{+})}~p(u)\,du ,
     \end{equation}
     where the ratio $\PhiR^{\text{(act)}}/\PhiF^{\text{(act)}}$ satisfies
     the same lower bound $p(0)/p(\Delta_{+})$ as in 
     Proposition~\ref{prop:ratio}.
\end{lemma}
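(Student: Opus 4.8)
The plan is to reduce both assertions to the weight-side argument of Proposition~\ref{prop:ratio}, modified only to account for the one-sided support of the post-ReLU activation and for the possibly-asymmetric thresholds $\Delta_+,\Delta_-$ of the quantizer in Eq.~\ref{eq:szt-act}. Throughout I work in the regime of a sub-threshold step $0<s<\min(\Delta_+,\Delta_-)$, which is where the statement is phrased.

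First I would derive the closed form for $\PhiR^{\text{(act)}}(s)$ by a direct case analysis of $\QSZT^{\text{(act)}}$. A representational transition is, by definition, a code change that leaves $v(\cdot)$ fixed, so — exactly as on the weight side — it is precisely a $\zpos\leftrightarrow\zneg$ flip. Drawing $u\sim p$ with support $[0,\infty)$, the pre-step code is $\zpos$ iff $u\in(0,\Delta_+]$, and a downward step $u\mapsto u-s$ moves it into the cell $[-\Delta_-,0)$ (code $\zneg$) iff $u<s$; it cannot overshoot into the $-1$ cell because $u-s>-s>-\Delta_-$. Intersecting $u<s$ with $u\le\Delta_+$ gives the flip region $u\in(0,\min(s,\Delta_+))$, so $\PhiR^{\text{(act)}}(s)=\int_0^{\min(s,\Delta_+)}p(u)\,du$; the upward direction contributes no extra mass since $p\equiv 0$ on the negative half-line (for a genuinely symmetric pre-ReLU prior with $\Delta_+=\Delta_-$ one instead recovers the two-sided formula of Definition~\ref{def:sensitivities}, the lost half-line being exactly what removes the factor~$2$ in $\PhiR(s)=2\int_0^s p$).

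Next I would record the matching numeric sensitivity. Under one-sided support the only boundary with nonzero density nearby is $u=\Delta_+$, and a width-$s$ slice just inside it carries the $0\to+1$ transitions, so $\PhiF^{\text{(act)}}(s)=\int_{\Delta_+-s}^{\Delta_+}p(u)\,du$ — the one-sided analogue of $\PhiF(s)=2\int_{\Delta-s}^{\Delta}p$. The ratio bound is then the same rectangle-sandwich reduction used in the proof of Proposition~\ref{prop:ratio}: a post-ReLU density is non-increasing on $[0,\infty)$, so Lemma~\ref{lem:sandwich} applies verbatim to give $p(0)\ge p(u)\ge p(\Delta_+)$ on $[0,\Delta_+]$; feeding the width-$s$ slices $(0,s)$ and $(\Delta_+-s,\Delta_+)$ through exactly the bounds of Appendix~\ref{app:proof-ratio} (with $2s$ replaced by $s$ throughout) cancels the common step width and yields $\PhiR^{\text{(act)}}(s)/\PhiF^{\text{(act)}}(s)\ge p(0)/p(\Delta_+)>1$, the asserted bound.

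I expect the only real obstacle to be bookkeeping rather than anything conceptual: keeping $\Delta_+\ne\Delta_-$ straight in the case analysis — in particular verifying that a sub-threshold step out of $\zpos$ cannot reach the $-1$ cell, which is where the hypothesis $s<\Delta_-$ is used, and symmetrically out of $\zneg$ — and handling the truncation $\min(s,\Delta_+)$ gracefully, since for $s\ge\Delta_+$ the inside slice must be read as $[\max(0,\Delta_+-s),\Delta_+]$ and the clean closed form and ratio degrade, which is precisely why the lemma is stated for sub-threshold steps. It is also worth a sentence noting that none of this disturbs forward reconstruction or SNR neutrality, since $v(\cdot)$ still decodes both signed zeroes to $0$ and the GEMM datapath is untouched.
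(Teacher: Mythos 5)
Your derivation of the closed form $\Phi_R^{\text{(act)}}(s)=\int_0^{\min(s,\Delta_+)}p(u)\,du$ is sound, and since the paper states Lemma~\ref{lem:act-sens} without any proof, your case analysis of $\QSZT^{\text{(act)}}$ (including the check that a sub-threshold step out of $\zpos$ cannot reach the $-1$ cell, which is where $s<\Delta_-$ enters) is exactly the reconstruction the paper implicitly relies on. Your choice of $\PhiF^{\text{(act)}}(s)=\int_{\Delta_+-s}^{\Delta_+}p(u)\,du$ as the one-sided numeric sensitivity is likewise the natural analogue of Definition~\ref{def:sensitivities}.

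The gap is in the ratio step, and it is inherited verbatim from Appendix~\ref{app:proof-ratio}: the rectangle bounds there point the wrong way. For a density that is non-increasing on $[0,\infty)$ (which is exactly what Lemma~\ref{lem:sandwich} gives you), the integrand of $\int_0^s p$ is at \emph{most} $p(0)$, so $\int_0^s p(u)\,du\le s\,p(0)$, not $\ge$; and the integrand of $\int_{\Delta_+-s}^{\Delta_+}p$ is at \emph{least} $p(\Delta_+)$, so that integral is $\ge s\,p(\Delta_+)$, not $\le$. Feeding these through the quotient therefore yields $\Phi_R^{\text{(act)}}(s)/\PhiF^{\text{(act)}}(s)\le p(0)/p(\Delta_+)$ --- an upper bound, the reverse of the claim. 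What monotonicity actually delivers is $\Phi_R^{\text{(act)}}(s)\ge s\,p(s)$ and $\PhiF^{\text{(act)}}(s)\le s\,p(\Delta_+-s)$, hence only the weaker bound $p(s)/p(\Delta_+-s)$, which tends to $p(0)/p(\Delta_+)$ as $s\to 0^+$ but is strictly smaller for $s>0$. The half-Laplace prior makes this concrete: there $\Phi_R^{\text{(act)}}(s)/\PhiF^{\text{(act)}}(s)=e^{(\Delta_+-s)/b}<e^{\Delta_+/b}=p(0)/p(\Delta_+)$ for every $s>0$, so the stated constant is attained only in the limit of vanishing step size. (The same observation applies to Eq.~\ref{eq:laplace-ratio} versus Proposition~\ref{prop:ratio} on the weight side.) To repair your proof you should either state the bound as the small-step limit $\lim_{s\to0^+}\Phi_R^{\text{(act)}}(s)/\PhiF^{\text{(act)}}(s)=p(0)/p(\Delta_+)$, or replace the constant by $p(s)/p(\Delta_+-s)\ge 1$ (valid for $s\le\Delta_+/2$); appealing to ``exactly the bounds of Appendix~\ref{app:proof-ratio}'' imports an inequality that does not hold.
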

Therefore, all conclusions on gradient quality and MFPT carry over
without change if $\Delta$ is replaced by $\Delta_{+}$.

\subsection{Hardware impact}

Because activations are often stored transiently in SRAM, using two bits
per element incurs no additional bandwidth relative to balanced ternary 
inference pipelines. The encode logic re-uses the sign and magnitude flags 
already generated for weights, so no extra datapath changes are required.

\section{Multi-Layer Forward SNR}
\label{app:snr}

Throughout this appendix $x_{l}\in\mathbb{R}^{n_l}$ denotes the
full-precision activation after layer $l$, and $\tilde{x}_{l}$ denotes 
the activation obtained when weights are quantized with
either BT or SZT but decoded back to the numeric alphabet
$\{-1,0,+1\}$. Let the per-layer reconstruction error be
$\varepsilon_{l}:=x_{l}-\tilde{x}_{l}$ and assume:
\begin{equation}
\label{eq:err-assump}
     \mathbb{E}[\varepsilon_{l}]=0, ~~
     \operatorname{Var}[\varepsilon_{l}]=\sigma_{\varepsilon}^{2},
     ~~ \text{for } l=1,\dots,L.
\end{equation}

\subsection{Variance propagation through linear layers}

\begin{lemma}[Stacked linear layers]\label{lem:stacked-mse}
Let each layer be an affine map
$x_{l+1} = W_{l}\,x_{l} + b_{l}$ with fixed weights
$W_{l}\in\mathbb{R}^{n_{l+1}\times n_{l}}$ and biases $b_{l}$.
Assuming the errors $\varepsilon_{1},\dots,\varepsilon_{L}$ are
independent respectively and with respect to the activations,
then the output-layer reconstruction variance is:
\begin{equation}
\tag{F.2}\label{eq:stack-var}
     \operatorname{Var} \left(x_{L}-\tilde{x}_{L}\right)
          ~=~ \sum_{l=1}^{L}
          \left\lVert W_{L-1}\cdots W_{l} \right\rVert_{\mathrm{F}}^{2}
                    \sigma_{\varepsilon}^{2}.
\end{equation}
\end{lemma}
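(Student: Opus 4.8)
The plan is to track the \emph{accumulated} reconstruction error $\delta_l := x_l - \widetilde{x}_l$ through the affine stack, show it obeys a linear recursion driven by the per-layer injections $\varepsilon_l$, unroll that recursion into an explicit sum, and then take variances term by term using the independence and isotropy hypotheses behind \eqref{eq:err-assump}. Because this is purely a second-moment computation once the recursion is in hand, most of the work is setting the recursion up correctly.

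First I would fix $\delta_0 := 0$ (equivalently $\delta_1 := \varepsilon_1$) and observe that, since every layer map $x_{l+1}=W_l x_l + b_l$ is affine with \emph{fixed} $W_l,b_l$, the biases cancel in the difference $x_{l+1}-\widetilde{x}_{l+1}$, leaving $\delta_{l+1} = W_l\,\delta_l + \varepsilon_{l+1}$, where $\varepsilon_{l+1}$ is the fresh error injected at layer $l+1$ — this linearization is exactly what the model \eqref{eq:err-assump} and the lemma's independence hypothesis presuppose. Unrolling by induction gives $\delta_L = \sum_{l=1}^{L} A_l\,\varepsilon_l$ with $A_l := W_{L-1}\cdots W_l$ and the empty product ($l=L$) read as the identity $I_{n_L}$, matching $\partial x_L/\partial x_L$ in the main-text display.

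Next I would compute $\operatorname{Var}(\delta_L) = \mathbb{E}\lVert \delta_L\rVert^2$ (the scalar reconstruction variance is the trace of the covariance). Expanding the square yields $\mathbb{E}\lVert\delta_L\rVert^2 = \sum_{l} \mathbb{E}\lVert A_l \varepsilon_l\rVert^2 + \sum_{l\neq m}\mathbb{E}\bigl[(A_l\varepsilon_l)^{\top}(A_m\varepsilon_m)\bigr]$. The cross terms vanish because the $\varepsilon_l$ are mutually independent with mean zero and the $A_l$ are deterministic. For the diagonal terms, $\mathbb{E}\lVert A_l\varepsilon_l\rVert^2 = \operatorname{tr}\!\bigl(A_l\operatorname{Cov}(\varepsilon_l)A_l^{\top}\bigr) = \sigma_\varepsilon^2\operatorname{tr}(A_lA_l^{\top}) = \sigma_\varepsilon^2\lVert A_l\rVert_{\mathrm F}^2$, using $\operatorname{Cov}(\varepsilon_l)=\sigma_\varepsilon^2 I$ (the reading of $\operatorname{Var}[\varepsilon_l]=\sigma_\varepsilon^2$ in \eqref{eq:err-assump}). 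Summing over $l$ reproduces \eqref{eq:stack-var}.

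\textbf{Main obstacle.} The variance bookkeeping is routine; the genuine content is the recursion step, specifically the claim that the error injected at layer $l+1$ is independent of $\delta_l$ (hence of all earlier $\varepsilon_m$) and of the activations it multiplies. In reality the quantization error $(W_l-\widetilde{W}_l)\widetilde{x}_l$ depends on $\widetilde{x}_l$ and therefore on upstream errors, so the independence in \eqref{eq:err-assump} is a modeling assumption, not a theorem. I would state this explicitly, note that under the assumption the derivation above is exact, and remark that dropping it replaces the equality in \eqref{eq:stack-var} by an inequality via sub-additivity of the $L^2$ norm (with the residual cross-terms controlled by Cauchy--Schwarz).
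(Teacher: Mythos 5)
Your proof follows essentially the same route as the paper's: unroll the affine recursion $\delta_{l+1}=W_l\delta_l+\varepsilon_{l+1}$ into a sum of independent zero-mean terms, drop the cross-terms, and evaluate each diagonal term as $\sigma_{\varepsilon}^{2}\lVert W_{L-1}\cdots W_l\rVert_{\mathrm F}^{2}$. Your version is in fact more careful than the paper's sketch — you make explicit that the Frobenius-norm step requires reading $\operatorname{Var}[\varepsilon_l]=\sigma_\varepsilon^2$ as the isotropic covariance $\operatorname{Cov}(\varepsilon_l)=\sigma_\varepsilon^2 I$, and you correctly flag that the independence of the injected errors is a modeling assumption rather than a consequence of the quantization scheme.
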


\begin{proof}
Expand $x_{L}-\tilde{x}_{L}$ recursively:
\begin{equation}
     x_{L}-\tilde{x}_{L}
         ~=~ W_{L-1}(x_{L-1}-\tilde{x}_{L-1}) + \varepsilon_{L-1}.
\end{equation}
Iterating gives a sum of $L$ independent zero-mean terms:
\begin{equation}
     T_{l} ~=~ W_{L-1}\cdots W_{l}\,\varepsilon_{l-1} ,
\end{equation}
from which independence and
$\mathbb{E}[T_{l}]=0$ imply
\begin{equation}
     \operatorname{Var}(\sum_{l}T_{l}) ~=~\sum_{l}\operatorname{Var}(T_{l}) ,
\end{equation}
and Eq.~\ref{eq:stack-var} follows from:
\begin{equation}
     \operatorname{Var}(T_{l}) ~=~
            Vert W_{L-1}\cdots W_{l}\rVert_{\mathrm{F}}^{2}\sigma_{\varepsilon}^{2}.
\end{equation}
\end{proof}

\subsection{Equal MSE for BT and SZT}

\begin{corollary}[Inference-neutral forward error]\label{cor:snr}
     If BT and SZT share the same threshold $\Delta$, then
     $\sigma_{\varepsilon}^{2}$ is identical per layer as in 
     Section~\ref{sec:threshold}, so Eq.~\ref{eq:stack-var} gives:
     \begin{equation}
          \boxed{~\operatorname{Var}_{\textsc{bt}} \left(x_{L}-\tilde{x}_{L}\right)
               ~=~
               \operatorname{Var}_{\textsc{szt}} \left(x_{L}-\tilde{x}_{L}\right)
           ~}
     \end{equation}
     so the forward signal-to-noise ratio (SNR) at the network output is the
     same for both quantizers.
\end{corollary}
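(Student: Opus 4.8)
The plan is to reduce the claim to the forward-path identity already recorded in Appendix~\ref{app:delta-laplace} (Eq.~\ref{eq:mse-equality}) and then propagate that identity through the variance-accumulation formula of Lemma~\ref{lem:stacked-mse}. First I would note that the decode map sends both $\zpos$ and $\zneg$ to numeric $0$, so for every latent weight $w$ the value consumed by GEMM satisfies $v(\QBT(w)) = v(\QSZT(w))$. Hence the decoded weight matrices $\widetilde W_l$ are \emph{entrywise identical} under the two schemes, and therefore so is each per-layer reconstruction error $\varepsilon_l = x_l - \tilde x_l$. In particular the mean-zero / variance hypotheses of Eq.~\ref{eq:err-assump} hold with the same $\sigma_\varepsilon^2$ for BT and SZT, layer by layer; indeed, the whole sequence $\{\varepsilon_l\}$ is the same random object, so whatever independence structure is assumed for one scheme is inherited verbatim by the other.

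Second I would invoke Lemma~\ref{lem:stacked-mse}. Both ingredients of its right-hand side --- the fixed linear maps $W_l$ (equivalently their quantized surrogates $\widetilde W_l$) and the common variance $\sigma_\varepsilon^2$ --- agree across schemes, so $\sum_{l=1}^{L}\lVert W_{L-1}\cdots W_l\rVert_{\mathrm F}^2\,\sigma_\varepsilon^2$ is numerically the same in the BT and SZT instantiations. This yields $\operatorname{Var}_{\textsc{bt}}(x_L-\tilde x_L) = \operatorname{Var}_{\textsc{szt}}(x_L-\tilde x_L)$, i.e.\ the boxed equality. Finally, since the clean signal $x_L$ comes from the \emph{full-precision} forward pass and is independent of which quantizer is used, the output signal power $\operatorname{Var}(x_L)$ is also the same, so $\operatorname{SNR} = \operatorname{Var}(x_L)/\operatorname{Var}(x_L-\tilde x_L)$ coincides for the two quantizers.

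There is no substantive obstacle here --- once the decode equality is in hand the result is essentially bookkeeping --- but two points deserve an explicit sentence. The first is scope: this is an \emph{inference-time} statement at a fixed weight vector, and the backward-path differences between BT and SZT (STE bias, dead-zone escape) can of course steer \emph{training} to different frozen weights; the corollary only asserts that, given the weights, the forward distortion is scheme-independent because the forward map factors through the shared alphabet $\{-1,0,+1\}$. The second is the affine-to-nonlinear step: Lemma~\ref{lem:stacked-mse} is stated for affine layers, so to cover a general network I would replace $W_{L-1}\cdots W_l$ by the Jacobian $\partial x_L/\partial x_l$ exactly as in Section~\ref{sec:discussion-snr}, noting that this Jacobian is itself quantizer-independent since it is evaluated on the common decoded weights.
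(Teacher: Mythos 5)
Your proposal is correct and follows the same route as the paper: the paper's own proof is the one-line observation that the summand $\sigma_{\varepsilon}^{2}$ in Lemma~\ref{lem:stacked-mse} is unchanged because splitting the central code word does not alter the decoded alphabet, which is exactly your reduction via $v(\QBT(w))=v(\QSZT(w))$ and Eq.~\ref{eq:mse-equality}. Your added remarks on scope (fixed weights at inference time) and on replacing the affine products by Jacobians for nonlinear stacks simply make explicit what the paper defers to Section~\ref{sec:discussion-snr} and the non-linear extension subsection.
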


\begin{proof}
Immediate from Lemma~\ref{lem:stacked-mse} because the summand
$\sigma_{\varepsilon}^{2}$ does not depend on whether the central code
word is split.
\end{proof}

\subsection{Non-linear extensions}

The result extends to common activation patterns, e.g.,
ReLU/GELU $\rightarrow$ BN $\rightarrow$ MatMul) if each 
non-linear block is 1-Lipschitz and the error bound is
propagated through its Jacobian.  The key observation is
that BT and SZT inject identical forward noise statistics at the
input of each block, so any Lipschitz constant multiplies both
sides equally and thus cancels in the BT vs SZT comparison.  
Formal details follow those in \cite{nagel2020aas}.

\paragraph{Practical takeaway.}
Corollary~\ref{cor:snr} establishes that any accuracy difference at
inference time must originate from improved training dynamics
rather than accumulation of forward quantization error.

\section{PAC--Bayes KL Algebra}
\label{app:pacbayes}

Throughout we assume the prior/posterior construction described in
Section~\ref{sec:adv-pacbayes}, i.e., the prior $P$ applies the same quantizer as 
the posterior $Q$ but to an i.i.d.\ random initialization $w^{(0)}\sim p_{0}$, 
and because weights are independent across indices, the total KL divergence
factorizes.

\subsection{Per-weight KL reduction under SZT}

\begin{lemma}\label{lem:kl-per-weight}
Let $Q_j^{\textsc{szt}}$ and $P_j^{\textsc{szt}}$ be the posterior and
prior for a single weight quantized with Signed-Zero Ternary, and let
$Q_j^{\textsc{bt}}$, $P_j^{\textsc{bt}}$ be the balanced-ternary
analogues.  Then:
\begin{equation}
     \mathrm{KL} \left(Q_j^{\textsc{szt}}\Vert P_j^{\textsc{szt}}\right)
             ~=~
                 \mathrm{KL}\left(Q_j^{\textsc{bt}}\Vert P_j^{\textsc{bt}}\right)
                 - P_{0}\,\ln 2,
\end{equation}
where $P_{0}=\Pr(|w_j|\leq\Delta)$ is the dead-zone mass under $Q_j$.
\end{lemma}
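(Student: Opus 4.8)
The plan is to work at the level of the categorical distributions on code words that the two quantizers induce, and to isolate the entire difference to the central (dead-zone) bin. Write the posterior code probabilities as $P_- = \Pr(w_j < -\Delta)$, $P_0 = \Pr(|w_j|\le\Delta)$, $P_+ = \Pr(w_j > \Delta)$, and let $P_-^{(0)},P_0^{(0)},P_+^{(0)}$ be the analogous masses obtained by pushing the \emph{same} quantizer through the i.i.d.\ initialization $w^{(0)}\sim p_0$ used to build the prior in Section~\ref{sec:adv-pacbayes} (symmetric, so $P_-^{(0)}=P_+^{(0)}$). Because the encoding rule and the decoded alphabet outside $[-\Delta,\Delta]$ are literally identical for \QBT{} and \QSZT{} (compare Eq.~\ref{eq:bt} with Eq.~\ref{eq:sztQuantizer}), the $+1$ and $-1$ summands of $\mathrm{KL}(Q_j\Vert P_j)=\sum_{\text{codes}}Q_j\ln(Q_j/P_j)$ are term-by-term the same for the two schemes. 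Hence $\mathrm{KL}(Q_j^{\textsc{szt}}\Vert P_j^{\textsc{szt}})-\mathrm{KL}(Q_j^{\textsc{bt}}\Vert P_j^{\textsc{bt}})$ collapses to the difference of the contributions of the central bin, and the lemma reduces to a single-bin computation.

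For balanced ternary that contribution is the one term $P_0\ln(P_0/P_0^{(0)})$. Under SZT the dead-zone posterior mass is split equally between $\zpos$ and $\zneg$ (Definition~\ref{def:state-probs} and the symmetric split already used for Proposition~\ref{prop:entropy-gap}), so the central contribution becomes $2\cdot\tfrac{P_0}{2}\ln\!\big(\tfrac{P_0/2}{\pi_0}\big)$, where $\pi_0$ is the reference mass the prior assigns to each signed-zero code word. The decisive step is to pin down $\pi_0$ via the ``noise-inflated posterior'' construction: the prior is taken to be uninformative about the newly exposed sign bit, charging each signed-zero code word the \emph{same} $P_0^{(0)}$ it charged the lone BT zero rather than half of it. With $\pi_0=P_0^{(0)}$ the SZT central contribution is $P_0\ln(P_0/P_0^{(0)})-P_0\ln 2$, and subtracting the BT term leaves exactly $-P_0\ln 2$; combined with the region-cancellation of the first paragraph this is the claim, and the factorization over $j$ (already assumed in this appendix's preamble) then recovers the $d\,P_0\ln 2$ of Corollary~\ref{cor:pacbayes}.

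The main obstacle is precisely the legitimacy of the reference mass $\pi_0$: one must exhibit an admissible (data-independent, properly normalized) prior on the four SZT code words under which exposing the sign bit genuinely \emph{lowers} the KL, since the naive symmetric prior $\pi_0=P_0^{(0)}/2$ would instead make the central contributions cancel and produce no gap at all. The clean way to make this rigorous is to view the SZT code as the pair (ternary value, extra bit), note that the extra bit is a deterministic function of $w_j$ on the dead zone, and compare the posterior against the BT reference \emph{before} that bit is exposed --- a change-of-reference-measure argument in which the $\ln 2$ is the description-length credit for a bit the posterior fixes deterministically. Carrying that through carefully, and checking the mild non-vacuity condition $P_0<\tfrac12$ flagged after Corollary~\ref{cor:pacbayes} so the resulting prior stays proper, is where essentially all the work lies; everything else is the two lines of algebra above.
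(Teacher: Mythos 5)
Your reduction to the central bin is correct, and your diagnosis is the important part: under the paper's own prior construction (the same quantizer pushed through a symmetric $p_0$), the prior splits its dead-zone mass as $P_0^{(0)}/2$ on each of $\zpos,\zneg$, so the SZT central contribution is $2\cdot\tfrac{P_0}{2}\ln\bigl(\tfrac{P_0/2}{P_0^{(0)}/2}\bigr)=P_0\ln\bigl(P_0/P_0^{(0)}\bigr)$, identical to the BT term --- the factor of two cancels and the claimed $-P_0\ln 2$ gap vanishes. The fix you propose, charging each signed zero the full reference mass $\pi_0=P_0^{(0)}$, is exactly what is needed to make the algebra yield $-P_0\ln 2$, but as you yourself note it produces an unnormalized ``prior'' of total mass $1+P_0^{(0)}>1$, which is inadmissible in McAllester's theorem; the change-of-reference-measure / description-length argument you sketch to legitimize it is never carried out, and by your own account ``essentially all the work lies'' there. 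So the proposal is an accurate map of where the lemma's content lives, but it is not a proof.

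For comparison, the paper's proof takes a different shortcut: it asserts $\mathrm{KL}(Q\Vert P)=-H(Q)+\text{const}$ on the grounds that both priors place identical mass on each representation, and then invokes the entropy gap (Proposition~\ref{prop:entropy-gap}). This silently assumes the cross-entropy term $-\sum_q Q(q)\ln P(q)$ is the same for BT and SZT; under the stated construction it is not --- it also shifts by exactly $P_0\ln 2$ when the zero code word is split, which is precisely the cancellation you identified. In other words, your blind attempt has correctly located a gap that the paper's one-line proof glosses over: neither argument, as written, establishes the stated equality without an explicit, properly normalized specification of the prior over the four SZT code words, and with the natural symmetric choice the lemma's right-hand side should read $\mathrm{KL}(Q_j^{\textsc{bt}}\Vert P_j^{\textsc{bt}})$ with no $-P_0\ln 2$ term at all.
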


\begin{proof}
Both priors place identical mass on each representation because they
follow the same quantization rule applied to $p_{0}$, so the KL
simplifies to the negative entropy of the posterior:
\begin{equation}
     \mathrm{KL}(Q\Vert P) ~=~ -H(Q) + \text{const}.
\end{equation}
The claimed difference then follows from
Proposition~\ref{prop:entropy-gap}:
\begin{equation}
     H(Q^{\textsc{szt}}) ~=~ H(Q^{\textsc{bt}})+P_{0}\ln 2.
\end{equation}
\end{proof}

\begin{corollary}[Sum over $d$ weights]
\label{cor:kl-sum}
     Summing via Lemma~\ref{lem:kl-per-weight} over $d$ independent weights gives
     \begin{equation}
          \mathrm{KL}_{\textsc{szt}} ~=~ \mathrm{KL}_{\textsc{bt}} - d\,P_{0}\ln 2,
\end{equation}
which is Eq.~\ref{eq:kl-diff} in the main text.
\end{corollary}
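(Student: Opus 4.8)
The plan is to reduce the corollary to a termwise application of Lemma~\ref{lem:kl-per-weight} once the product structure of the quantized prior and posterior is in hand. First I would recall the factorization already noted at the start of this appendix: since the initialization $w^{(0)}\sim p_0$ has i.i.d.\ coordinates and $\QSZT$ (resp.\ $\QBT$) acts coordinatewise, both $Q$ and $P$ are product measures over the $d$ weight indices, so
\begin{equation}
     \mathrm{KL}(Q\Vert P) ~=~ \sum_{j=1}^{d}\mathrm{KL}(Q_j\Vert P_j)
\end{equation}
for either quantization rule. This is the only structural ingredient required, and it is exactly the displayed identity in the ``prior/posterior choice'' paragraph of Section~\ref{sec:adv-pacbayes}.

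Next I would substitute Lemma~\ref{lem:kl-per-weight} into each summand on the SZT side: for every $j$,
\begin{equation}
     \mathrm{KL}\!\left(Q_j^{\textsc{szt}}\Vert P_j^{\textsc{szt}}\right)
          ~=~ \mathrm{KL}\!\left(Q_j^{\textsc{bt}}\Vert P_j^{\textsc{bt}}\right) - P_{0}\ln 2,
\end{equation}
where $P_0=\Pr(|w_j|\le\Delta)$. Summing over $j$ and applying the same factorization on the BT side gives
\begin{equation}
     \mathrm{KL}_{\textsc{szt}}
          ~=~ \sum_{j=1}^{d}\mathrm{KL}\!\left(Q_j^{\textsc{bt}}\Vert P_j^{\textsc{bt}}\right) - \ln 2\sum_{j=1}^{d}P_{0}
          ~=~ \mathrm{KL}_{\textsc{bt}} - d\,P_{0}\ln 2,
\end{equation}
which is Eq.~\ref{eq:kl-diff}.

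The only step that is not purely mechanical is the collapse of $\sum_{j} P_{0,j}$ to $d\,P_0$, and I expect this to be the one point a careful reader would want spelled out. It is where the layer-wise modeling of Assumption~\ref{ass:symmetric} enters: all weights within a layer are taken to share a common symmetric unimodal marginal and a common threshold $\Delta$, so $P_{0,j}\equiv P_0$ and the sum is $d\,P_0\ln 2$. I would state this hypothesis explicitly; if heterogeneous marginals are allowed, the identity reads $\mathrm{KL}_{\textsc{szt}}=\mathrm{KL}_{\textsc{bt}}-\ln 2\sum_{j}P_{0,j}$, and substituting the average dead-zone mass $\bar P_0 := \tfrac{1}{d}\sum_j P_{0,j}$ recovers the boxed form with $P_0\to\bar P_0$. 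No inequalities or approximations are needed; the statement is an exact equality inherited from Proposition~\ref{prop:entropy-gap} through Lemma~\ref{lem:kl-per-weight}.
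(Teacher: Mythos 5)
Your proposal is correct and follows essentially the same route as the paper: the paper's corollary is itself just the termwise summation of Lemma~\ref{lem:kl-per-weight} under the product-measure factorization of the KL stated in Section~\ref{sec:adv-pacbayes}. Your explicit observation that the collapse $\sum_j P_{0,j}=d\,P_0$ relies on a shared marginal and threshold within the layer (and your $\bar P_0$ generalization otherwise) is a worthwhile clarification the paper leaves implicit, but it does not change the argument.
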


\subsection{Applying McAllester’s bound}

\begin{lemma}\label{lem:mc-allester}
     For any posterior $Q$ and prior $P$, McAllester’s PAC--Bayes theorem
     states:
     \begin{equation}
          \mathcal{L}(Q) ~\leq~ \widehat{\mathcal{L}}_{\mathcal{D}}(Q) +
             \sqrt{\frac{\mathrm{KL}(Q\Vert P)+\ln\frac{2\sqrt{N}}{\delta}}{2(N-1)}}.
     \end{equation}
\end{lemma}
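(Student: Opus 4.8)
The plan is to recognize this as the classical McAllester PAC--Bayes inequality and prove it by the standard change-of-measure argument combined with a Hoeffding-type exponential-moment estimate. First I would invoke the Donsker--Varadhan variational bound: for any measurable functional $\phi$ of the hypothesis (it may depend on the data) and any distributions $Q\ll P$,
\[
     \mathbb{E}_{h\sim Q}[\phi(h)] ~\leq~ \mathrm{KL}(Q\Vert P) + \ln \mathbb{E}_{h\sim P}\bigl[e^{\phi(h)}\bigr].
\]
The operative choice is $\phi(h) = 2(N-1)\bigl(\mathcal{L}(h)-\widehat{\mathcal{L}}_{\mathcal{D}}(h)\bigr)^{2}$, the scaled squared generalization gap of a single hypothesis. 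Applying Jensen's inequality to the left-hand side (the squaring is convex and the $Q$-expectation passes into $\mathcal{L}(Q)$ and $\widehat{\mathcal{L}}_{\mathcal{D}}(Q)$) yields
\[
     2(N-1)\bigl(\mathcal{L}(Q)-\widehat{\mathcal{L}}_{\mathcal{D}}(Q)\bigr)^{2}
       ~\leq~ \mathrm{KL}(Q\Vert P) + \ln \mathbb{E}_{h\sim P}\bigl[e^{2(N-1)(\mathcal{L}(h)-\widehat{\mathcal{L}}_{\mathcal{D}}(h))^{2}}\bigr].
\]

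Next I would control the exponential-moment term. Since $P$ is fixed before the draw of $\mathcal{D}$, Fubini lets me interchange $\mathbb{E}_{\mathcal{D}}$ and $\mathbb{E}_{h\sim P}$, reducing the task to bounding $\mathbb{E}_{\mathcal{D}}\bigl[e^{2(N-1)(\mathcal{L}(h)-\widehat{\mathcal{L}}_{\mathcal{D}}(h))^{2}}\bigr]$ for each \emph{fixed} $h$. Here $\widehat{\mathcal{L}}_{\mathcal{D}}(h)$ is an average of $N$ i.i.d.\ $[0,1]$-valued losses with mean $\mathcal{L}(h)$; passing through the binary relative entropy $\mathrm{kl}(\widehat{\mathcal{L}}\Vert\mathcal{L})$ and Pinsker's inequality $\mathrm{kl}(q\Vert p)\geq 2(q-p)^{2}$, Maurer's refinement of Hoeffding's bound gives $\mathbb{E}_{\mathcal{D}}\bigl[e^{2(N-1)(\mathcal{L}(h)-\widehat{\mathcal{L}}_{\mathcal{D}}(h))^{2}}\bigr]\leq 2\sqrt{N}$, uniformly in $h$. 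Taking the $P$-expectation preserves this, and a single application of Markov's inequality to the nonnegative random variable $\mathbb{E}_{h\sim P}[e^{\cdots}]$ shows that with probability at least $1-\delta$ over $\mathcal{D}$ it is at most $2\sqrt{N}/\delta$, so its logarithm is at most $\ln(2\sqrt{N}/\delta)$.

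Finally I would substitute this high-probability bound into the displayed inequality, divide by $2(N-1)$, and take the nonnegative square root, producing exactly $\mathcal{L}(Q)\leq\widehat{\mathcal{L}}_{\mathcal{D}}(Q)+\sqrt{(\mathrm{KL}(Q\Vert P)+\ln(2\sqrt{N}/\delta))/(2(N-1))}$; retaining only the $+$ direction of the two-sided gap is immediate. The main obstacle — indeed the only non-routine ingredient — is the uniform exponential-moment estimate $\mathbb{E}_{\mathcal{D}}[e^{2(N-1)(\mathcal{L}(h)-\widehat{\mathcal{L}}_{\mathcal{D}}(h))^{2}}]\leq 2\sqrt{N}$: the naive Hoeffding exponent yields a cruder constant, and getting the sharp $2\sqrt{N}$ factor requires either Maurer's careful moment computation for Bernoulli-type averages or a direct estimate of the binomial exponential moment; Donsker--Varadhan, Jensen, Fubini, and Markov are all standard. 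Because this lemma merely restates a known result, in the paper I would alternatively cite \cite{mcallester1999pac} (and \cite{dziugaite2017pacbayes} for the finite-sample form) and omit the re-derivation.
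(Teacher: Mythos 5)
Your proposal is correct in outline and is the standard modern derivation (Donsker--Varadhan change of measure, Jensen to pull the $Q$-expectation inside the square, Fubini plus Markov over the draw of $\mathcal{D}$, and a Maurer-type exponential-moment bound). The paper, by contrast, offers no derivation at all: its ``proof'' of Lemma~\ref{lem:mc-allester} is the single line ``See \cite{mcallester1999pac} (Thm.~1),'' which is exactly the fallback you mention in your last sentence. So you have done strictly more than the paper asks. Two small caveats on your version if it were to stand as a self-contained proof: (i) the sharp estimate $\mathbb{E}_{\mathcal{D}}[e^{N\,\mathrm{kl}(\widehat{\mathcal{L}}\Vert\mathcal{L})}]\leq 2\sqrt{N}$ is Maurer's and holds for $N\geq 8$ (for general $N$ one only gets a slightly worse constant), and you need the chain $2(N-1)(\mathcal{L}-\widehat{\mathcal{L}})^{2}\leq 2N(\mathcal{L}-\widehat{\mathcal{L}})^{2}\leq N\,\mathrm{kl}$ via Pinsker to transfer it to your choice of $\phi$ --- worth stating explicitly; (ii) the constants in McAllester's original 1999 statement are not literally those displayed in the lemma (the $2\sqrt{N}/\delta$, $2(N-1)$ form is the later refinement popularized by Maurer and by \cite{dziugaite2017pacbayes}), so the citation the paper gives is itself slightly loose, and your derivation actually justifies the stated form better than the paper's one-line reference does.
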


\begin{proof}
See \cite{mcallester1999pac} (Thm.\ 1).  
\end{proof}

\begin{proof}[Proof of Corollary \ref{cor:pacbayes}]
Apply Lemma~\ref{lem:mc-allester} with $(Q,P)=(\QSZT,P)$ and
$(\QSZT,P)$, then subtract the two bounds.  Substitute
Corollary~\ref{cor:kl-sum} for the KL terms and simplify to obtain the
$\sqrt{dP_{0}\ln 2\,/\,2(N-1)}$ gap in Eq.~\ref{eq:pb-gap}.
\end{proof}

\section{Bias and Variance Analysis of the SZT Straight-Through Estimator}
\label{app:biasVar}

Consider a single weight $w$ inside the dead-zone interval
$|w|\leq\Delta$.  Let $g$ denote the upstream gradient
$\nabla_{q}\mathcal{L}$ and recall the deterministic STE from
Eq.~\ref{eq:sztSte}.  All expectations are taken with respect to the
stochasticity of the mini-batch, so the quantizer itself is deterministic
for SZT and BT.

\subsection{Bias bound}

\begin{lemma}[Bias of SZT-STE]\label{lem:bias}
     Under Assumption~\ref{ass:lipschitz} the squared bias of the SZT
     straight-through estimator satisfies
     \begin{equation}
          \left\lVert\mathbb{E}[\widehat g_{\textsc{szt}}]-g\right\rVert^{2}
               ~\leq~ \left(\frac{|w|}{\Delta}\right)^{2}\lVert g\rVert^{2}.
     \end{equation}
\end{lemma}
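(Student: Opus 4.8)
The plan is to reduce the squared–bias bound to a one–dimensional first–order estimate along the segment joining the mode $0$ and the current latent weight $w$. First I would record what the estimator actually is inside the dead zone: by Eq.~\ref{eq:sztSte} and the definition of $\sgn(q)$, for $|w|\le\Delta$ we have $\widehat g_{\textsc{szt}}=\sgn(q)\,\nabla_q\mathcal{L}=\sgn(w)\,g$, and since the SZT encode/decode map is a deterministic function of $w$ (cf.\ Lemma~\ref{lem:repro}), the only randomness in $\widehat g_{\textsc{szt}}$ enters through the minibatch gradient; hence $\mathbb{E}[\widehat g_{\textsc{szt}}]$ is the sign–corrected population upstream gradient, and the target against which the bias is measured is the gradient of the forward behaviour that SZT actually realizes on the dead–zone interval, i.e.\ the ``soft'' sub–threshold reconstruction rather than the exactly flat $\{-1,0,+1\}$ map.

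Second, I would make precise that ``effective reconstruction'': on $[-\Delta,\Delta]$ the signed–zero split linearly interpolates the decoded contribution from $0$ at the mode to $\pm1$ at the thresholds, so a sub–threshold weight is treated as being a fraction $|w|/\Delta$ of the way to a full transition. Expanding $\mathcal{L}$ to first order in this effective coordinate and invoking Assumption~\ref{ass:lipschitz} ($L$–Lipschitz loss with $L\le1$ after the stated rescaling), the portion of $g$ that is not yet ``committed'' — equivalently the Taylor remainder of the $L$–Lipschitz loss over the displacement $|w|$ relative to the interval length $\Delta$ — is bounded in norm by $(|w|/\Delta)\lVert g\rVert$. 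Squaring yields $\lVert\mathbb{E}[\widehat g_{\textsc{szt}}]-g\rVert^{2}\le(|w|/\Delta)^{2}\lVert g\rVert^{2}$, which is the claim; the endpoints $w=0$ (bias $0$) and $|w|=\Delta$ (bias $\lVert g\rVert$, matching balanced ternary) serve as consistency checks, and the whole argument transfers verbatim to the activation statement of Lemma~\ref{lem:act-bias} with $\Delta$ replaced by $\Delta_{+}$.

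The hard part will not be the calculation but fixing the correct reference point for ``bias'': one must justify that the relevant discrepancy scales linearly in $|w|/\Delta$, rather than being the $O(1)$ quantity $\lVert g\rVert$ one gets by comparing a pure pass–through gradient to the exactly flat true Jacobian — which is precisely how balanced ternary incurs its $\lVert g\rVert$ bias (Table~\ref{tab:bias-var}). This forces a commitment to the piecewise–linear/hardtanh surrogate as the object the SZT–STE estimates, followed by control of the deviation through the Lipschitz hypothesis; the $L\le1$ normalization is exactly what converts the raw Lipschitz bound $L|w|$ into the clean factor $|w|/\Delta$ after dividing by the natural dead–zone scale $\Delta$. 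I would also emphasize that no second–order (smoothness) assumption is needed — plain Lipschitzness of $\mathcal{L}$ in $w$ suffices — which keeps this lemma available under only Assumptions~\ref{ass:fixed-delta}, \ref{ass:symmetric}, and \ref{ass:lipschitz}.
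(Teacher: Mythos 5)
Your proposal is correct relative to the paper and follows essentially the same route as Appendix~\ref{app:biasVar}: note that inside the dead zone the estimator deterministically multiplies $g$ by $\sgn(w)$, apply the first-order Lipschitz/Taylor bound $L\lvert w\rvert\le\lvert w\rvert$ under the $L\le 1$ normalization, rescale by the dead-zone width $\Delta$ to obtain $\lvert w\rvert/\Delta$, and square. The only difference is that you make explicit (via the piecewise-linear surrogate as the reference object) the choice of baseline against which the bias is measured, a point the paper's own two-line proof leaves implicit; this is a clarification rather than a departure.
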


\begin{proof}
Inside the dead zone the quantized representation $q$ stores the sign of
$w$ and so multiplies $g$ by $\operatorname{sgn}(w)$.  Taylor expansion
of the $L$-Lipschitz loss around $w=0$ gives:
\begin{equation}
     |\mathbb{E}[\widehat g_{\textsc{szt}}]-g\,|
          ~\leq~ L |w|\leq |w| ~ \text{(after normalizing $L\leq 1$)}.
\end{equation}
Dividing by the threshold rescales the bound to $|w|/\Delta$,
and squaring and multiplying by $\|g\|^{2}$ gives the claim.
\end{proof}

\begin{lemma}[Bias of BT-STE]\label{lem:bias-bt}
     For balanced ternary, the deterministic STE passes $g$ unchanged,
     while the true Jacobian is zero in the dead zone, so:
     \begin{equation}
          \left\lVert\mathbb{E}[\widehat g_{\textsc{bt}}]-g\right\rVert^{2}
              ~=~  \lVert g\rVert^{2}.
     \end{equation}
\end{lemma}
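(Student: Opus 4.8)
The plan is to treat Lemma~\ref{lem:bias-bt} as an immediate consequence of two facts already established: the balanced-ternary encoder is \emph{exactly flat} on the dead zone, and its straight-through surrogate is the \emph{identity} pass-through. First I would make the reference gradient explicit: for $|w|\le\Delta$ we have $\QBT(w)\equiv 0$, hence $\partial\QBT/\partial w=0$ almost everywhere there, so the true gradient of the loss with respect to the latent weight through the quantized forward map is $\frac{\partial\QBT}{\partial w}\,\nabla_q\mathcal{L}=0$. This is the target against which the bias term of the MSE decomposition in Section~\ref{sec:adv-mse} is measured; the content of the lemma is that this target vanishes, not that it equals $\nabla_q\mathcal{L}$.

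Next I would evaluate the surrogate. The balanced-ternary STE passes $g:=\nabla_q\mathcal{L}$ through unchanged, so $\widehat g_{\textsc{bt}}=g$ \emph{deterministically} given the minibatch --- no coin flip, no injected randomness. Taking expectation over the mini-batch draw (the only source of stochasticity when the quantizer is deterministic) leaves $\mathbb{E}[\widehat g_{\textsc{bt}}]=g$, with $g$ the upstream gradient at the current operating point. Substituting into the bias term then gives
\begin{equation}
     \bigl\lVert\mathbb{E}[\widehat g_{\textsc{bt}}]-0\bigr\rVert^{2}=\lVert g\rVert^{2},
\end{equation}
which is exactly the claimed identity. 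The variance term contributes nothing \emph{extra}: since balanced ternary injects no randomness, $\widehat g_{\textsc{bt}}$ carries only the ordinary mini-batch noise common to every estimator, so the quantizer-induced variance is $0$ and $\MSE_{\textsc{bt}}=\lVert g\rVert^{2}$, recovering the corresponding line of Theorem~\ref{thm:bias-var} and, in combination with Lemma~\ref{lem:bias} and the stochastic-rounding bound of \cite{zhang2020bitrandom}, the ordering $\MSE_{\textsc{szt}}<\MSE_{\textsc{sr}}<\MSE_{\textsc{bt}}$ whenever $|w|<\Delta/2$.

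There is essentially no analytic obstacle here --- the statement is definitional --- so the one thing I would be careful about is bookkeeping around the overloaded symbol $g$: I would state up front that inside the dead zone the comparison target is the genuine Jacobian-weighted gradient (which is $0$), not $\nabla_q\mathcal{L}$ itself (otherwise the bias would be spuriously $0$), and that ``zero variance'' means zero \emph{additional} variance attributable to quantization, not zero total SGD variance. With those conventions fixed, all three displayed equalities in Theorem~\ref{thm:bias-var} follow by direct substitution, with no further estimates required for the balanced-ternary case.
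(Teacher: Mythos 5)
Your proposal is correct and takes essentially the same route as the paper, which offers no separate proof of Lemma~\ref{lem:bias-bt} beyond the one-line justification embedded in its statement: the BT-STE emits $g$ unchanged while the true Jacobian-weighted gradient vanishes in the dead zone, so the bias is $\lVert g\rVert$. Your explicit bookkeeping point --- that the comparison target in the displayed formula must be the (vanishing) true gradient rather than $\nabla_q\mathcal{L}$ itself, since reading the subtracted $g$ literally as the upstream gradient would make the bias spuriously $0$ --- is precisely the convention needed to make the identity $\lVert\mathbb{E}[\widehat g_{\textsc{bt}}]-g\rVert^{2}=\lVert g\rVert^{2}$ consistent with the paper's definition $g=\nabla_q\mathcal{L}$, and is a clarification the paper itself leaves implicit.
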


\subsection{Variance bound for stochastic rounding}

\begin{lemma}[Variance of SR-STE]\label{lem:var-sr}
Let stochastic rounding (SR) choose $\{-1,0,+1\}$ with probabilities
proportional to distance from the thresholds
\cite{banner2018post}.  Then
\begin{equation}
     \operatorname{Var} \left[\widehat g_{\textsc{sr}}\right]
             ~\leq~ \frac{1}{4}\,\Delta^{2}\lVert g\rVert^{2}.
\end{equation}
\end{lemma}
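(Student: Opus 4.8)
The plan is to collapse the claim to a one-dimensional second-moment computation for the rounding perturbation and then feed that through the straight-through chain rule. First I would fix a latent weight $w$ with $|w|\le\Delta$ and make the SR rule explicit: rounding to the decoded alphabet $\{-1,0,+1\}$ with weights proportional to the distances to the enclosing levels means $\hat w$ equals $\operatorname{sgn}(w)\,\Delta$ with probability $|w|/\Delta$ and equals $0$ with probability $1-|w|/\Delta$. This is exactly the choice that makes SR \emph{unbiased}, $\mathbb{E}[\hat w]=w$, which both reproduces the ``Bias $=0$'' entry of Table~\ref{tab:bias-var} and shows that all of the estimator error is variance, so the ``extra variance'' over a deterministic scheme is the full variance of $\widehat g_{\textsc{sr}}$.

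Second I would compute the rounding noise $\delta\triangleq\hat w-w$. A direct evaluation gives
\begin{equation}
     \mathbb{E}[\delta^{2}] ~=~ \operatorname{Var}(\hat w) ~=~ |w|\,(\Delta-|w|),
\end{equation}
and maximizing the quadratic $t\mapsto t(\Delta-t)$ over $t=|w|\in[0,\Delta]$ yields the distribution-free bound $\mathbb{E}[\delta^{2}]\le\Delta^{2}/4$, attained at $|w|=\Delta/2$; equivalently the rms rounding error is at most $\Delta/2$. This is the quantitative core of the lemma and is where both the constant $\tfrac14$ and the factor $\Delta^{2}$ originate.

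Third I would propagate this weight-side noise into the gradient estimator. Under the straight-through rule the SR estimator is $\nabla_{q}\mathcal{L}$ evaluated along the randomly rounded forward pass, so its deviation from the mean is the loss's response to the perturbation $\delta$; linearizing (treating the surrogate Jacobian as the identity and absorbing the local slope of $\mathcal{L}$ into $\|g\|$, exactly as in the quantized-SGD analyses of \cite{alistarh2018convergence,banner2018post}) bounds this deviation in norm by $|\delta|\,\|g\|$. Taking second moments and inserting the bound from the previous step gives $\operatorname{Var}[\widehat g_{\textsc{sr}}]\le\mathbb{E}[\delta^{2}]\,\|g\|^{2}\le\tfrac14\Delta^{2}\|g\|^{2}$, which is the claim.

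The main obstacle is the third step: pinning down precisely what ``the SR straight-through estimator'' is and justifying the passage ``rounding noise $\times\,\|g\|$''. A fully rigorous version needs either (a) to \emph{define} $\widehat g_{\textsc{sr}}$ as $g$ scaled by a zero-mean multiplicative noise whose second moment is the quantity computed above, or (b) a local smoothness (Lipschitz-gradient) assumption so that the Taylor remainder in the linearization is genuinely lower-order; without one of these the ``$\times\,\|g\|$'' step is only heuristic. A secondary subtlety worth flagging is that ``distance from the thresholds'' is ambiguous inside the dead zone (distance to $0$ versus distance to $\pm\Delta$), and one must select the reading that preserves $\mathbb{E}[\hat w]=w$, so that the bias claim and the variance identity above hold simultaneously.
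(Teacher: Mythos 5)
Your proposal is correct and follows essentially the same route as the paper's proof in Appendix~\ref{app:biasVar}: bound the second moment of the rounding noise by maximizing $|w|(\Delta-|w|)$ at $|w|=\Delta/2$ to get $\Delta^{2}/4$, then multiply by $\lVert g\rVert^{2}$. Your version is in fact more explicit (you write out the two-point distribution and verify unbiasedness), and the linearization gap you flag in your third step is present in the paper's proof as well, which likewise passes from weight-space noise to gradient variance by simply ``multiplying by $\lVert g\rVert^{2}$.''
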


\begin{proof}
For $|w|\leq\Delta$ SR outputs either $0$ or the nearest endpoint.  The
rounding noise $\eta=w'-w$ therefore lies in
$\{-\Delta-|w|,\,-|w|,\,\Delta-|w|\}$ with total range $\leq\Delta$.
Because probabilities are distance-weighted, the second moment of $\eta$
is maximized at $|w|=\Delta/2$, where
$\mathbb{E}[\eta^{2}]\le(\Delta/2)^{2}$.  Multiplying by
$\lVert g\rVert^{2}$ (independence of $g$ and $\eta$) yields the bound.
\end{proof}

\begin{lemma}[Zero variance for deterministic schemes]
\label{lem:var-det}
     Because the STE for BT and SZT is a deterministic function of $w$:
     \begin{equation}
          \operatorname{Var}[\widehat g_{\textsc{bt}}]
                    ~=~ \operatorname{Var}[\widehat g_{\textsc{szt}}] ~=~ 0.
     \end{equation}
\end{lemma}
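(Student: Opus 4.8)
The plan is to read the statement as a claim about the variance \emph{contributed by the quantization scheme}: once the upstream gradient $g=\nabla_q\mathcal{L}$ and the current latent weight $w$ are fixed within a step, the straight-through output $\widehat g$ of BT and of SZT is a deterministic function with no injected randomness, hence its (conditional) variance is $0$, whereas SR still carries an independent coin flip. Concretely I would make the probability space explicit as a triple $(\mathcal{B},w,\xi)$: the mini-batch $\mathcal{B}$ (which together with the rest of the network produces $g$), the latent weight $w$, and auxiliary coin flips $\xi$ used by the quantizer — with $\xi$ simply absent for BT and SZT. This matches the ``Extra variance $=0$'' column of Table~\ref{tab:bias-var} and the preamble of Appendix~\ref{app:biasVar}, which states the quantizer is deterministic for BT and SZT.

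The argument then splits into two near-identical lines plus a contrast. \textbf{BT.} By Eq.~\ref{eq:sztSte} the ternary STE passes the upstream gradient through unchanged, $\widehat g_{\textsc{bt}}=g$ everywhere; conditional on $g$ this is a constant, so $\operatorname{Var}[\widehat g_{\textsc{bt}}\mid g]=0$. \textbf{SZT.} Inside the dead zone $\widehat g_{\textsc{szt}}=\sgn(q)\,g$ with $\sgn(q)=\sgn(w)$ a fixed (non-random) function of $w$ by Eq.~\ref{eq:sztSte}, and outside it $\widehat g_{\textsc{szt}}=g$; in either regime, conditional on $(w,g)$ nothing remains random, so $\operatorname{Var}[\widehat g_{\textsc{szt}}\mid w,g]=0$ (and, since $|\sgn(w)|=1$, $\|\widehat g_{\textsc{szt}}\|=\|g\|$ pointwise, so no magnitude is added either). \textbf{Contrast.} For stochastic rounding $\widehat g_{\textsc{sr}}=\operatorname{SR}(w,\xi)\,g$ with $\xi$ an independent fair coin, so even conditional on $(w,g)$ the factor $\operatorname{SR}(w,\xi)$ is genuinely random with the strictly positive variance bounded in Lemma~\ref{lem:var-sr}; this is exactly the sense in which BT and SZT are ``zero extra variance'' and SR is not. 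The determinism here is the same property already used in Lemma~\ref{lem:repro}.

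I do not expect a genuine mathematical obstacle: the lemma is essentially ``a deterministic function of a fixed input has zero variance.'' The only delicate point — and the one I would state carefully — is the conditioning: the equality ``$=0$'' is the variance induced by the rounding rule (equivalently $\operatorname{Var}[\widehat g\mid g]$), not $\operatorname{Var}[\widehat g]$ taken jointly over mini-batch noise, which for \emph{every} scheme inherits $\operatorname{Var}[g]$. Once that is spelled out (consistently with how Appendix~\ref{app:biasVar} and Table~\ref{tab:bias-var} use the decomposition), the proof is a one-liner and the SR comparison follows immediately from Lemma~\ref{lem:var-sr}.
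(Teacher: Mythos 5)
Your proof is correct and takes essentially the same route as the paper, which offers no separate proof beyond the observation in the lemma statement itself that the BT and SZT straight-through maps are deterministic functions of $w$ (so $\widehat g$ is a constant once $w$ and $g=\nabla_q\mathcal{L}$ are fixed, giving zero variance, in contrast to the coin-flip in Lemma~\ref{lem:var-sr}). Your explicit conditioning on $(w,g)$ is a worthwhile sharpening: the appendix preamble's phrase ``all expectations are taken with respect to the stochasticity of the mini-batch'' would, read literally, make $\operatorname{Var}[\widehat g_{\textsc{bt}}]=\operatorname{Var}[g]\neq 0$, and your reading of the claim as the \emph{extra} variance injected by the rounding rule is the one consistent with Table~\ref{tab:bias-var} and the bias--variance decomposition used in the proof of Theorem~\ref{thm:bias-var}.
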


\subsection{Proof of Theorem \ref{thm:bias-var}}

\begin{proof}
Combine Lemmas \ref{lem:bias} and \ref{lem:var-det} with the bias–variance
decomposition:
\begin{equation}
     \MSE~=~ \text{bias}^{2}+\text{variance}.
\end{equation}
This gives: $(|w|/\Delta)^{2}\lVert g\rVert^{2}$ for SZT,
and $\lVert g\rVert^{2}$ for BT.
The SR sum for $\frac{1}{4}\Delta^{2}\lVert g\rVert^{2}$
then implies that the ordering:
\begin{equation}
     \MSE_{\textsc{szt}} ~<~ \MSE_{\textsc{sr}} ~<~ \MSE_{\textsc{bt}}
\end{equation}
holds whenever $|w|<\Delta/2$, thus proving
Corollary~\ref{cor:mse-order}.
\end{proof}

\begin{thebibliography}{99}

\bibitem{bengio2013estimating}
Y.\ Bengio, N.\ Léonard, and A.\ Courville,
``Estimating or propagating gradients through stochastic neurons for
conditional computation,''
{\em arXiv:1308.3432}, 2013.

\bibitem{courbariaux2016dorefa}
M.\ Courbariaux, Y.\ Bengio, and J.-P. David,
``DoReFa‐Net: Training low‐bitwidth convolutional neural networks
 with low‐bitwidth gradients,''
{\em arXiv:1606.06160}, 2016.

\bibitem{banner2018post}
R.\ Banner, Y.\ Nahshan, and D.\ Soudry,
``Post-training 4-bit quantization of convolutional networks for rapid deployment,''
{\em arXiv:1806.08342}, 2018.

\bibitem{alistarh2018convergence}
D.\ Alistarh, D.\ Grubic, J.\ Li, R.\ Tomioka, and M.\ Vojnovic,
``{QSGD}: Communication-efficient SGD via gradient quantization and
 encoding,''
{\em Advances in Neural Information Processing Systems 31}, 2018.

\bibitem{jacob2018qtensor}
B.\ Jacob, S.\ Kligys, B.\ Chen, et al.,
``Quantization and training of neural networks for efficient
integer-arithmetic-only inference,''
{\em Proc.\ CVPR}, pages 2704–2713, 2018.

\bibitem{nagel2020aas}
M.\ Nagel, M.\ van Baalen, T.\ Blankevoort, and M.\ Welling,
``Up or down? Adaptive rounding for post-training quantization,''
{\em Proc.\ AISTATS 2020}, 2020.

\bibitem{frantar2022gptq}
E.\ Frantar and D.\ Alistarh,
``{GPTQ}: Accurate post-training quantization for generative
pretrained transformers,''
{\em arXiv:2210.17323}, 2022.

\bibitem{mcallester1999pac}
D.\ McAllester,
``Some {PAC-Bayesian} theorems,''
{\em Machine Learning}, 37(3): 355–363, 1999.

\bibitem{dziugaite2017pacbayes}
G.\ Dziugaite and D.\ Roy,
``Computing non-vacuous generalization bounds for deep learning via
 {PAC-Bayes},''
{\em arXiv:1703.11008}, 2017.

\bibitem{gardiner2009stochastic}
C.\ Gardiner. {\em Stochastic Methods},
Springer, 4th ed., 2009.

\bibitem{sehwag2020bnptq}
V.~Sehwag, S.~Sharma, and P.~Mittal,
``Binarized neural networks with progressive threshold shrinkage,''
{\em Proc.\ NeurIPS Workshop on Efficient ML}, 2020.

\bibitem{zhu2017ttq}
C.~Zhu, S.~Han, H.~Mao, and W.~J. Dally,
``Trained ternary quantization,''
{\em Proc.\ Int. Conf. on Learning Representations}, 2017.

\bibitem{dettmers2022eightbit}
T.~Dettmers, M.~Lewis, P.~Rogers, and L.~Zettlemoyer,
{\em Proc.\ ICML}, 2022.

\bibitem{zhang2020bitrandom}
M.\ Zhang, Y.\ Wu, J.\ King, and T.\ Zhang.
``Random rounding quantization for neural networks,''
{\em arXiv preprint arXiv:2009.12913}, 2020.


\end{thebibliography}
\end{document}